\let\tilde\widetilde
\let\hat\widehat
\begin{document}

\title{Recovery of simultaneous low rank and two-way sparse
  coefficient matrices, a nonconvex approach} 

\author{Ming Yu\thanks{Booth School of Business, The University of Chicago. Email: \href{mailto:mingyu@chicagobooth.edu}{mingyu@chicagobooth.edu} } , 
%Zhaoran Wang\thanks{Department of Industrial Engineering and Management Sciences, Northwestern University. Email: \href{mailto:zhaoranwang@gmail.com}{zhaoranwang@gmail.com} } ,
Varun Gupta\thanks{Booth School of Business, The University of Chicago. Email: \href{mailto:varun.gupta@chicagobooth.edu}{varun.gupta@chicagobooth.edu}} , 
and Mladen Kolar\thanks{Booth School of Business, The University of Chicago. Email: \href{mailto:mladen.kolar@chicagobooth.edu}{mladen.kolar@chicagobooth.edu}}}

%\author[1]{Ming Yu}
%\author[2]{Zhaoran Wang}
%\author[1]{Varun Gupta}
%\author[1]{Mladen Kolar}
%\affil[1]{\small Booth School of Business, University of Chicago}
%%\affil[2]{\small Department of Industrial Engineering and Management Sciences, Northwestern University}
%\affil[2]{\small Department of IEMS, Northwestern University}
\date{}
\maketitle

\begin{abstract}
  We study the problem of recovery of matrices that are simultaneously
  low rank and row and/or column sparse. Such matrices appear in
  recent applications in cognitive neuroscience, imaging, computer
  vision, macroeconomics, and genetics.  We propose a GDT (Gradient
  Descent with hard Thresholding) algorithm to efficiently recover
  matrices with such structure, by minimizing a bi-convex function
  over a nonconvex set of constraints. We show linear convergence of
  the iterates obtained by GDT to a region within statistical error of
  an optimal solution. As an application of our method, we consider
  multi-task learning problems and show that the statistical error
  rate obtained by GDT is near optimal compared to minimax rate. Experiments
  demonstrate competitive performance and much faster running speed
  compared to existing methods, on both simulations and real data sets.

 % Multi-task learning is widely used to simultaneously estimate
 % parameter coefficients from related learning problems with the aim
 % of improving estimation performance over learning individual tasks
 % separately. In this paper, we focus on multi-task learning problems
 % where the parameter coefficients are sparse and in addition belong
 % to a common low dimensional subspace. This setting is motivated by
 % recent applications in cognitive neuroscience, imaging, computer
 % vision, macroeconomics, and genetics. We develop a scalable
 % procedure, termed ALIGHT ({\bf AL}ternat{\bf I}ng {\bf G}radient
 % descent with {\bf H}ard {\bf T}hresholding), that converges
 % linearly to the unknown coefficients, up to a statistical accuracy,
 % and works well in the setting where both the dimension of the
 % coefficients and the number of tasks is large.  To establish linear
 % convergence of ALIGHT, we extend the literature on convergence of
 % alternating gradient descent procedures to a high-dimensional
 % setting, which is of independent interest. The effectiveness of
 % ALIGHT is illustrated through extensive simulations and two real
 % datasets.

\end{abstract}

\paragraph{Keywords:}
Nonconvex optimization, Low rank and two-way sparse coefficient matrix, Gradient descent with hard thresholding, Multi-task learning, Two-way sparse reduce rank regression

\section{Introduction}
\label{sec:introduction}

Many problems in machine learning, statistics and signal processing
can be formulated as optimization problems with a smooth objective and
nonconvex constraints. The objective usually measures the fit of a
model, parameter, or signal to the data, while the constraints encode
structural requirements on the model.  Examples of nonconvex
constraints include sparsity where the parameter is assumed to have
only a few non-zero coordinates \cite{Hsu2011Robust,
  Yu2016Statistical, She2017Selective, turlach2005simultaneous,
  Zhu2016Personalized}, group sparsity where the parameter is
comprised of several groups only few of which are non-zero
\cite{Lounici2011Oracle, kim2010tree, Huang2010benefit,
  Chen2011Integrating}, and low-rankness where the parameter is
believed to be a linear combination of few factors
\cite{Amit2007Uncovering, Chen2011Reduced, Chen2015Fast,
  Gross2011Recovering, Jain2013Low}. Common approach to dealing with
nonconvex constraints is via convex relaxations, which allow for
application of simple optimization algorithms and easy theoretical
analysis \cite{Agarwal2012Noisy, Candes2010power, Fazel2001rank,
  Candes2009Exact, Koltchinskii2011Nuclear}. From a practical point
of view, it has been observed that directly working with a nonconvex
optimization problem can lead to both faster and more accurate
algorithms \cite{Sun2016Guaranteed, Zhao2015Nonconvex,
  yu2017influence, wang2014nonconvex}. As a result, a body of
literature has recently emerged that tries to characterize good
performance of these algorithms \cite{Barber2017Gradient,
  Zhang2017Nonconvex, Ha2017Alternating}.

In this work, we focus on the following optimization problem
\begin{equation}
  \label{eq:opt}
  \hat \Theta \in \arg \min_{\Theta \in \Xi} f(\Theta)
\end{equation}
where $\Xi \subset \RR^{m_1 \times m_2}$ is a nonconvex set comprising of
low rank matrices that are also row and/or column sparse,
\[
\Xi = \Xi(r,s_1,s_2) = \{ \Theta \in \RR^{m_1 \times m_2} \mid {\rm rank}(\Theta) \leq r,
\|\Theta\|_{2,0} \leq s_1, \|\Theta^\top\|_{2,0} \leq s_2\},
\]
where
$\|\Theta\|_{2,0} = |\{ i \in [m_1] \mid \sum_{j \in [m_2]}
\Theta_{ij}^2 \neq 0\}|$
is the number of non-zero rows of $\Theta$. Such an optimization
problem arises in a number of applications including sparse singular
value decomposition and principal component analysis
\cite{wang2014nonconvex, Ma2014Learning, Hastie2015Matrix}, sparse
reduced-rank regression \cite{Bunea2012Joint, Ma2014Adaptive,
  Chen2011Reduced, Chen2012Sparse, Vounou2012Sparse}, and
reinforcement learning \cite{calandriello2014sparse,
  sutton1998introduction, Lazaric2010Bayesian, wilson2007multi,
  Snel2012Multi}.  Rather than considering convex relaxations of the
optimization problem \eqref{eq:opt}, we directly work with a nonconvex
formulation.  Under an appropriate statistical model, the global
minimizer $\hat \Theta$ approximates the ``true'' parameter $\Theta^*$
with an error level $\epsilon$. Since the optimization problem
\eqref{eq:opt} is highly nonconvex, our aim is to develop an iterative
algorithm that, with appropriate initialization, converges linearly to
a stationary point $\check \Theta$ that is within $c\cdot\epsilon$
distance of $\hat \Theta$. In order to develop a computationally
efficient algorithm, we reparametrize the $m_1 \times m_2$ matrix
variable $\Theta$ as $UV^\top$ with $U \in \RR^{m_1 \times r}$ and
$V \in \RR^{m_2 \times r}$, and optimize over $U$ and $V$. That is, we
consider (with some abuse of notation) the following optimization
problem
\begin{equation}
  \label{eq:opt:rep}
\begin{aligned}
(\hat U, \hat V) \in \arg \min_{U \in \Ucal , V \in \Vcal} 
f(U,V),
\end{aligned}
\end{equation}
where 
\[
\Ucal = \Ucal(s_1) = \cbr{ U \in \RR^{m_1 \times  r} \mid \|U\|_{2,0} \leq s_1 }
\quad\text{and}\quad
\Vcal = \Vcal(s_2) = \cbr{ V \in \RR^{m_2 \times  r} \mid \|V\|_{2,0} \leq s_2 }.
\] 
Such a reparametrization automatically enforces the low rank structure
and will allow us to develop an algorithm with low computational cost
per iteration. Note that even though $\hat U$ and $\hat V$ are only
unique up to scaling and a rotation by an orthogonal matrix,
$\hat \Theta = \hat U\hat V^\top$ is usually unique.

We make several contributions in this paper. First, we develop an
efficient algorithm for minimizing \eqref{eq:opt:rep}, which uses
projected gradient descent on a nonconvex set in each iteration. Under
conditions on the function $f(\Theta)$ that are common in the
high-dimensional literature, we establish linear convergence of the
iterates to a statistically relevant solution.  In particular, we
require that the function $f(\Theta)$ satisfies restricted strong
convexity (RSC) and restricted strong smoothness (RSS), conditions
that are given in Condition ({\bf RSC/RSS}) below.  Compared to the
existing work for optimization over low rank matrices with
(alternating) gradient descent, we need to study a projection onto a
nonconvex set in each iteration, which in our case is a
hard-thresholding operation, that requires delicate analysis and novel
theory. Our second contribution, is in the domain of multi-task
learning. Multi-task learning is a widely used learning framework
where similar tasks are considered jointly for the purpose of
improving performance compared to learning the tasks separately
\cite{caruana1997multitask}. We study the setting where the number of
input variables and the number of tasks can be much larger than the
sample size (see \cite{Ma2014Adaptive} and references there in). Our
focus is on simultaneous variable selection and dimensionality
reduction. We want to identify which variables are relevant predictor
variables for different tasks and at the same time we want to combine
the relevant predictor variables into fewer features that can be
explained as latent factors that drive the variation in the multiple
responses.  We provide a new algorithm for this problem and improve
the theoretical results established in \cite{Ma2014Adaptive}. In
particular, our algorithm does not require a new independent sample in
each iteration and allows for non-Gaussian errors, while at the same
time achieves nearly optimal error rate compared to the information
theoretic minimax lower bound for the problem. Moreover, our
prediction error is much better than the error bound proposed in
\cite{Bunea2012Joint}, and matches the error bound in
\cite{She2017Selective}. However, all of the existing algorithms are
slow and cannot scale to high dimensions.  Finally, our third
contribution is in the area of reinforcement learning. We study the
Multi-task Reinforcement Learning (MTRL) problem via value function
approximation. In MTRL the decision maker needs to solve a sequence of
Markov Decision Processes (MDPs). A common approach to Reinforcement
Learning when the state space is large is to approximate the value
function of linear basis functions (linear in some appropriate feature
representation of the states) with sparse support. Thus, it is natural
to assume the resulting coefficient matrix is low rank and row
sparse. Our proposed algorithm can be applied to the regression step
of any MTRL algorithm (we chose Fitted $Q$-iteration (F$Q$I) for
presentation purposes) to solve for the optimal policies for
MDPs. Compared to \cite{calandriello2014sparse} which uses convex
relaxation, our algorithm is much more efficient in high dimensions.

\subsection{Related Work}

Our work contributes to several different areas, and thus is naturally
related to many existing works. We provide a brief overview of the
related literature and describe how it is related to our
contributions. For the sake of brevity, we do not provide an extensive
review of the existing literature.

{\bf Low-rank Matrix Recovery.} A large body of literature exists on
recovery of low-rank matrices as they arise in a wide variety of
applications throughout science and engineering, ranging from quantum
tomography to signal processing and machine learning 
\cite{Aaronson2007learnability,Liu2009Interior,Srebro2005Maximum,Davenport2016Overview}.
Recovery of a low-rank matrix can be formulated as the following optimization
problem
\begin{equation}
  \label{eq:opt:related}
  \hat \Theta \in \arg \min_{\Theta \in \RR^{m_1 \times m_2}} f(\Theta)
\quad \text{subject to } {\rm rank}(\Theta) \leq r,
\end{equation}
where the objective function $f : \RR^{m_1 \times m_2} \mapsto \RR$ is
convex and smooth. The problem \eqref{eq:opt:related} is highly
nonconvex and NP-hard in general \cite{Fazel2001rank,Fazel2004Rank}. A
lot of the progress in the literature has focused on convex
relaxations where one replaces the rank constraint using the nuclear
norm.  See, for example, \cite{Candes2009Exact, Candes2010power,
  Candes2010Matrix, Recht2010Guaranteed, Cai2010singular,
  Recht2011simpler, Gross2011Recovering, Chandrasekaran2011Rank,
  Hsu2011Robust, Rohde2011Estimation, Koltchinskii2011Nuclear,
  Harchaoui2012Large, Negahban2011Estimation, Chen2011Integrating,
  Xiang2012Optimal, Negahban2012Restricted, Agarwal2012Noisy,
  Recht2013Parallel, Chen2015Incoherence, Chen2014Coherent,
  Chen2013Low, Hastie2015Matrix, Cai2015ROP:, Yan2015Simultaneous,
  Zhu2016Personalized, Wang2015Orthogonal} and references therein.
However, developing efficient algorithms for solving these convex
relaxations is challenging in regimes with large $m_1$ and $m_2$
\cite{Hsieh2014Nuclear}.  A practical approach, widely used in large
scale applications such as recommendation systems or collaborative
filtering \cite{Takacs2007Major, Koren2009Matrix, Gemulla2011Large,
  Zhuang2013fast} relies on solving a nonconvex optimization problem
where the decision variable $\Theta$ is factored as $UV^\top$, usually
referred to as the Burer-Monteiro type decomposition
\cite{Burer2003nonlinear, Burer2005Local}. A stationary point of this
nonconvex problem is usually found via a block coordinate descent-type
algorithm, such as alternating minimization or (alternating) gradient
descent. Unlike for the convex relaxation approaches, the theoretical
understanding of these nonconvex optimization procedures has been
developed only recently \cite{Keshavan2010Matrix, Keshavan2010Matrixa,
  Jain2013Low, Hardt2014Understanding, Hardt2014Fast,
  Hardt2014Computational, Sun2016Guaranteed, Zhao2015Nonconvex,
  Zheng2015Convergent, Bhojanapalli2016Global,
  Bhojanapalli2016Dropping, Tu2016Low, Chen2015Fast, Zhu2017Globala,
  Zhu2017Global, Ge2016matrix, Li2017Geometry, Mei2016Landscape}.  Compared to the classical nonconvex
optimization theory, which only shows a sublinear convergence to a
local optima, the focus of the recent literature is on establishing
linear rates of convergence or characterizing that the objective
does not have spurious local minima. In addition to the methods that
work on the factorized form, \cite{Jain2010Guaranteed, Lee2010ADMiRA:,
  Jain2015Fast, Barber2017Gradient} consider projected gradient-type
methods which optimize over the matrix variable
$\Theta \in \RR^{m_1 \times m_2}$. These methods involve calculating
the top $r$ singular vectors of an $m_1 \times m_2$ matrix at each
iteration. When $r$ is much smaller than $m_1$ and $m_2$, they incur
much higher computational cost per iteration than the methods that
optimize over $U \in \RR^{m_1 \times r}$ and
$V \in \RR^{m_2 \times r}$.

Our work contributes to this body of literature by studying
gradient descent with a projection step on a non-convex
set, which requires hard-thresholding. Hard-thresholding in this
context has not been considered before. Theoretically we need a new
argument to establish linear convergence to a statistically
relevant point. \cite{Chen2015Fast} considered projected gradient
descent in a symmetric and positive semidefinite setting with a
projection on a convex set. 
Our work is most closely related to
\cite{Zhao2015Nonconvex}, which used the notion of inexact first order
oracle to establish their results, but did not consider the
hard-thresholding step.

{\bf Structured Low-rank Matrices.} Low-rank matrices with additional
structure also commonly arise in different problems ranging from
sparse principal component analysis (PCA) and sparse singular value
decomposition to multi-task learning. In a high-dimensional setting,
the classical PCA is inconsistent \cite{johnstone2009consistency} and
recent work has focused on PCA with additional sparse structure on the
eigenvectors \cite{Amini2009High, Berthet2013Optimal,
  Birnbaum2013Minimax, Cai2013Sparse, Vu2013Minimax, Ma2013Sparse,
  Yuan2013Truncated}. Similar sparse structure in singular vectors
arises in sparse SVD and biclustering \cite{Lee2010Biclustering,
  Chen2011Reduced, Ma2014Learning, Uematsu2017SOFAR:, Yang2014Sparse,
  Balakrishnan2012Recovering, Kolar2011Minimax,
  balakrishnan2011statistical}.  While the above papers use the
sparsity structure of the eigenvectors and singular vectors, it is
also possible to have simultaneous low rank and sparse structure
directly on the matrix $\Theta$. Such a structure arises in multi-task
learning, covariance estimation, graph denoising and link prediction
\cite{Mei2012Encoding, Richard2012Estimation}. Additional structure on
the sparsity pattern was imposed in the context of sparse rank-reduced
regression, which is an instance of multi-task learning
\cite{Chen2012Sparse, Bunea2012Joint, Ma2014Adaptive,
  Bahadori2016Scalable, She2017Selective}. Our algorithm described in
Section~\ref{sec:methodology} can be applied to the above mentioned
problems. In Section~\ref{sec:MTL}, we theoretically study multi-task
learning in the setting of \cite{Ma2014Adaptive}.  We relax conditions
imposed in \cite{Ma2014Adaptive}, specifically allowing for
non-Gaussian errors and not requiring independent samples at each step
of the algorithm, while still achieving the near minimax rate of
convergence. We provide additional discussion in Section~\ref{sec:MTL}
after formally providing results for the multi-task learning setting.
In Section~\ref{sec:experiment}, we further corroborate our theoretical
results in extensive simulations and show that our algorithm
outperforms existing methods in multi-task learning.

{\bf Low-rank Plus Sparse Matrix Recovery.} At this point, it is worth
mentioning another commonly encountered structure on the decision
variable $\Theta$ that we do not study in the current paper. In
various applications it is common to model $\Theta$ as a sum of two
matrices, one of which is low-rank and the other one
sparse. Applications include robust PCA, latent Gaussian graphical
models, factor analysis and multi-task learning
\cite{Candes2011Robust, Hsu2011Robust, Chandrasekaran2011Rank,
  Chen2013Low, Agarwal2012Noisy, Gu2016Low, Zhang2017Nonconvex,
  Xu2017Speeding, Ha2017Alternating}. While Burer-Monteiro
factorization has been considered for the low-rank component in this
context (see, for example, \cite{Zhang2017Nonconvex} and references
therein), the low-rank component is dense as it needs to be
incoherent.  The incoherence assumption guarantees that the low-rank
component is not too spiky and can be identified
\cite{Candes2009Exact}. An alternative approach was taken in
\cite{Ha2017Alternating} where alternating minimization over the
low-rank and sparse component with a projection on a nonconvex set was
investigated.

\subsection{Organization of the paper}

In Section \ref{sec:methodology} we provide details for our proposed
algorithm. Section \ref{sec:theoretical} states our assumptions and
the theoretical result with a proof sketch. Section \ref{sec:MTL}
shows applications to multi-task learning, while Section
\ref{sec:experiment} presents experimental results. Section
\ref{sec:appendix} provides detailed technical proofs. Conclusion is
given in Section~\ref{sec:conclusion}.

%%% Local Variables:
%%% TeX-master: "paper.tex"
%%% End:

\section{Gradient Descent With Hard Thresholding}
\label{sec:methodology}

In this section, we detail our proposed algorithm, which is based on
gradient descent with hard thresholding (GDT).  Our focus is on
developing an efficient algorithm for minimizing $f(\Theta)$ with
$\Theta \in \Xi$. In statistical estimation and machine learning a
common goal is to find $\Theta^*$, which is an (approximate) minimizer
of $\EE[f(\Theta)]$ where the expectation is with respect to
randomness in data.  In many settings, the global minimizer of
\eqref{eq:opt} can be shown to approximate $\Theta^*$ up to
statistical error, which is problem specific.  In
Section~\ref{sec:theoretical}, we will show that iterates of our
algorithm converge linearly to $\Theta^*$ up to a statistical
error. It is worth noting that an argument similar to that in the
proof of Theorem~\ref{main} can be used to establish linear
convergence to the global minimizer $\hat \Theta$ in a deterministic
setting. That is, suppose $(\hat U, \hat V)$ is a global minimizer of
the problem \eqref{eq:opt:rep} and
$\hat \Theta = \hat U \hat V^{\top}$.  Then as long as the conditions
in Section~\ref{sec:theoretical} hold for $\hat U, \hat V$ in place of
$U^*, V^*$, we can show linear convergence to $\hat \Theta$ up to an
error level defined by the gradient of the objective function at
$\hat \Theta$. See the discussion after Theorem~\ref{main}.

Our algorithm, GDT, uses a Burer-Monteiro factorization to write
$\Theta = UV^{\top}$, where $U \in \R^{m_1 \times r}$ and
$V \in \R^{m_2 \times r}$, and minimizes 
\begin{equation}
  \label{eq:opt:rep:1}
\begin{aligned}
(\hat U, \hat V) \in \arg \min_{U \in \Ucal , V \in \Vcal} 
f(U,V) + g(U,V),
\end{aligned}
\end{equation}
where $g(U,V)$ is the penalty function defined as
\begin{equation}
\label{eq:penalty}
g(U,V) = \frac 14 \|U^{\top}U - V^{\top}V \|_F^2.
\end{equation}
The role of the penalty is to find a balanced decomposition of
$\hat \Theta$, one for which $\sigma_i(\hat U) = \sigma_i(\hat V)$,
$i=1,\ldots,r$ \cite{Zhu2017Global, Zhang2017Nonconvex}.  Note the
value of the penalty is equal to $0$ for a balanced solution, so we
can think of the penalized objective as looking through minimizer of
\eqref{eq:opt:rep} for a one that satisfies
$\hat U^{\top}\hat U - \hat V^{\top}\hat V = 0$.  In particular,
adding the penalty function $g$ does not change the minimizer of $f$
over $\Xi$. The convergence rate of GDT depends on the condition
number of $(U^*,V^*)$, the point algorithm converges to. The penalty
ensures that the iterates $U,V$ are not ill-conditioned.  Gradient
descent with hard-thresholding on $U$ and $V$ is used to minimize
\eqref{eq:opt:rep:1}. Details of GDT are given in
Algorithm~\ref{algo:AltGD}.  The algorithm takes as input parameters
$\eta$, the step size; $s_1$, $s_2$, the sparsity level; $T$, the
number of iterations; and a starting point $\Theta^0$.

The choice of starting point $\Theta^0$ is very important as the
algorithm performs a local search in its neighborhood. In
Section~\ref{sec:theoretical} we will formalize how close $\Theta^0$
needs to be to $\Theta^*$, while in Section~\ref{sec:MTL} we provide a
concrete way to initialize under a multi-task learning model. In
general, we envisage finding $\Theta^0$ by solving the following
optimization problem
\begin{equation}
  \label{eq:meta_init}
  \Theta^0 = \arg\min_{\Theta \in \RR^{m_1 \times m_2}} f(\Theta) + {\rm pen}(\Theta),
\end{equation}
where ${\rm pen}(\Theta)$ is a (simple) convex penalty term making the
objective \eqref{eq:meta_init} a convex optimization problem.  For
example, we could use the vector $\ell_1$ norm,
${\rm pen}(\Theta) = \|\Theta\|_1$. 
%, or the mixed $\ell_1/\ell_2$ norm,
%${\rm pen}(\Theta) = \sum_{i=1}^{m_1} \|\Theta_i\|_2$. 
The choice of
penalty ${\rm pen}(\Theta)$ should be such that solving the
optimization problem in \eqref{eq:meta_init} can be done efficiently
in a high dimensional setting.  In practice, if solving the convex
relaxation is slow, we can start from the all zero matrix and perform
several (proximal) gradient steps to get an appropriate
initialization. See for example \cite{Zhang2017Nonconvex}.  Once an
initial estimate $\Theta^0$ is obtained, we find the best rank $r$
approximation $\tilde\Theta = \tilde U \tilde \Sigma \tilde V^\top$ to
$\Theta^0$ and use it to obtain the initial iterates $ U^0$ and $V^0$.
In each step, GDT updates $U$ and $V$ by taking a gradient step and
hard-thresholding the result. The operation $\text{Hard}(U,s)$ keeps
$s$ rows of $U$ with the largest $\ell_2$ row-norm, while setting to
zero other rows.

Suppose that the target statistical parameter $\Theta^*$ is in
$\Xi(r^*, s_1^*,s_2^*)$. The sparsity level $s_1^*$ and $s_2^*$ as
well as the rank $r^*$ are not known in practice, but are needed in
Algorithm~\ref{algo:AltGD}. For the convergence proof we require that
the input parameters to the algorithm are set as $s_1 = c\cdot s_1^*$
and $s_2 = c\cdot s_2^*$ for some $c > 1$.  From simulations, we
observe that the estimation accuracy is not very sensitive to the
choice of $s_1$ and $s_2$ as long as they are chosen greater than the
true values $s_1^*$ and $s_2^*$. This suggests that in practice, we
could set $s_1$ and $s_2$ to be reasonably large values whenever a
reasonable guess of the sparsity level is available, as incorrectly
omitting nonzero value (false negative) is more troublesome than
including one zero value (false positive).  Alternatively, as we do in
simulations, we can use a validation set or an information criteria to
select these tuning parameters. For example, \cite{She2017Selective}
develops the scale-free predictive information criterion to select the
best sparsity parameters. The rank $r$ can be estimated as in
\cite{Bunea2011Optimal}.

To the best of our knowledge, GDT is the first gradient based
algorithm to deal with
a nonconvex optimization problem over a parameter space that is
simultaneously low rank and row and column sparse. In the following
section we will provide conditions on the objective function $f$ and
the starting point $\Theta^0$ which guarantee linear convergence to
$\Theta^*$ up to a statistical error.  As an application, we consider
the multi-task learning problem in Section~\ref{sec:MTL}. We show that
the statistical error nearly matches the optimal minimax rate, while
the algorithm achieves the best performance in terms of estimation and
prediction error in simulations.

\begin{comment}
\begin{algorithm}[tb]
   \caption{Alternating Descent MultI-Task (GDT) Learning}
   \label{algo:AltGD}
\begin{algorithmic}
   \STATE {\bfseries Input:} $\bm X$, $\bm Y$, $\eta$
   \STATE {\bfseries Initialize $ U^0, V^0$}
   %\REPEAT
   \FOR{$t=1$ {\bfseries to} $T$}
   \STATE $V^{t+1} = V^{\top} - \eta\nabla_V f(U^{\top},V^{\top})$
   \STATE $U^{t+0.5} = U^{\top} - \eta\nabla_U f(U^{\top},V^{t+1})$, $U^{t+1} = \text{Hard} (U^{t+0.5}, s)$
   \ENDFOR
   %\UNTIL{$noChange$ is $true$}
   \STATE {\bfseries Output:}  $\Theta^{T} = U^{T}(V^{T})^{\top}$
\end{algorithmic}
\end{algorithm}
\end{comment}

% \begin{algorithm}[tb]
%    \caption{Initialization}
%    \label{algo:Initialization}
% \begin{algorithmic}
%    %\STATE {\bfseries Input:} $\bm X$, $\bm Y$
%    %\REPEAT
%    %\STATE Use standard multitask learning method to get estimation $\hat\Theta$.
%    \STATE Get estimation $\hat\Theta = \text{argmin}_{\Theta}\{ f(\Theta) + \text{pen}(\Theta) \}$
%    \STATE $[\hat U, \hat\Sigma, \hat V] = $ rank $r$ SVD of $\hat\Theta$
%    %\UNTIL{$noChange$ is $true$}
%    \STATE {\bfseries Output:}  $ U^0 = \text{Hard} (\hat U,s_1), V^0 = \text{Hard}( \hat V(\hat\Sigma)^{\top}, s_2)$
% \end{algorithmic}
% \end{algorithm}

\begin{algorithm}[tb]
   \caption{Gradient Descent with Hard Thresholding (GDT)}
   \label{algo:AltGD}
\begin{algorithmic}[1]
   %\STATE {\bfseries Input:} $\bm X$, $\bm Y$, $\eta$
   \STATE {\bfseries Input:} Initial estimate $\Theta^0$
   \STATE {{\bfseries Parameters:} Step size $\eta$, Rank $r$, Sparsity level $s_1, s_2$, Total number of iterations $T$}
   \STATE $(\tilde U, \tilde\Sigma, \tilde V) = $ rank $r$ SVD of $\Theta^0$
   \STATE $ U^0 = \text{Hard} (\tilde U(\tilde\Sigma)^{\frac 12},s_1), V^0 = \text{Hard}( \tilde V(\tilde\Sigma)^{\frac 12}, s_2)$
   \FOR{$t=1$ {\bfseries to} $T$}
   %\STATE $V^{t+0.5} = V^{t} - \eta\nabla_V f( U^{t},V^{t})$, $V^{t+0.5}_{h} = \text{Hard} (V^{t+0.5},s_2)$
   %\STATE $(  V^{t+1}, R_{ V}^{t+0.5} ) = \qr(V_{h}^{t+0.5})$, $U^{t} =  U^{t} (R_{ V}^{t+0.5})^{\top}$
   %\STATE $U^{t+0.5} = U^{t} - \eta\nabla_U f(U^{t}, V^{t+1})$, $U^{t+0.5}_{h} = \text{Hard} (U^{t+0.5},s_1)$
   %\STATE $(  U^{t+1}, R_{ U}^{t+0.5} ) = \qr(U^{t+0.5}_{h})$, $V^{t+1} =  V^{t+1} (R_{ U}^{t+0.5})^{\top}$
   \STATE $V^{t+0.5} = V^{t} - \eta\nabla_V f( U^{t},V^{t}) - \eta\nabla_V g( U^{t},V^{t})$, 
   \STATE $V^{t+1} = \text{Hard} (V^{t+0.5},s_2)$
   \STATE $U^{t+0.5} = U^{t} - \eta\nabla_U f( U^{t},V^{t}) - \eta\nabla_U g( U^{t},V^{t})$, 
   \STATE $U^{t+1} = \text{Hard} (U^{t+0.5},s_1)$
   \ENDFOR
   \STATE {\bfseries Output:}  $\Theta^{T} =  U^{T}(V^{T})^{\top}$
   %\UNTIL{$noChange$ is $true$}
\end{algorithmic}
\end{algorithm}

\begin{comment}
\begin{algorithm}[tb]
   \caption{}
   \label{algo}
\begin{algorithmic}
   \STATE {\bfseries Input:} $\bm M$, $\bm t$, $T$, $\eta$, $\lambda$, $\epsilon$
   \STATE {\bfseries Parameters:} $\bm M$, $\bm t$, $T$, $\eta$, $\lambda$, $\epsilon$
   \STATE {\bfseries Notation: $B_1 = [u_1, ..., u_K], B_2 = [v_1, ..., v_K]$}
   \STATE {\bfseries Initialize $B_2$}
   %\REPEAT
   \WHILE{$tolerance > \epsilon$}
   
   \WHILE{$tolerance(B_1) > \epsilon$}
   \FOR{$i=1$ {\bfseries to} $K$}
   \STATE $u_i = (u_i - \eta \cdot \frac{\partial \ell}{\partial u_i} - \lambda\eta)_+$
   \ENDFOR
   \ENDWHILE
   
   \WHILE{$tolerance(B_2) > \epsilon$}
   \FOR{$i=1$ {\bfseries to} $K$}
   \STATE $v_i = (v_i - \eta \cdot \frac{\partial \ell}{\partial v_i} - \lambda\eta)_+$
   \ENDFOR
   \ENDWHILE
   
   \ENDWHILE
   %\UNTIL{$noChange$ is $true$}
\end{algorithmic}
\end{algorithm}
\end{comment}

%%% Local Variables:
%%% TeX-master: "paper.tex"
%%% End:

\section{Theoretical Result}
\label{sec:theoretical}

In this section, we formalize the conditions and state the main
result on the linear convergence of our algorithm. 
We begin in Section~\ref{sec:conditions} by
stating the conditions on the objective function $f$ and
initialization that are needed for our analysis. In
Section~\ref{sec:main-result}, we state Theorem~\ref{main} that 
guarantees linear convergence under the conditions to a statistically 
useful point. The proof outline is given in Section~\ref{sec:proof}.
In Section~\ref{sec:MTL} to follow, we derive results for multi-task 
learning as corollaries of our main result. 

\subsection{Regularity Conditions}
\label{sec:conditions}

We start by stating mild conditions on the objective function $f$,
which have been used in the literature on high-dimensional estimation
and nonconvex optimization, and they hold with high-probability for a
number of statistical models of interest \cite{Zhao2015Nonconvex,
  Zhang2017Nonconvex, Ha2017Alternating}.  Note that all the conditions depend on the choice of $s_1$ and $s_2$ (or equivalently, on $c$). 
  
For $\Theta^* \in \Xi(r^*, s_1^*, s_2^*) $, let
$\Theta^* = U_{\Theta^*}\Sigma_{\Theta^*}V_{\Theta^*}^\top$ be its
singular value decomposition.  Let
$U^* = U_{\Theta^*}\Sigma_{\Theta^*}^{1/2}$ and
$V^* = V_{\Theta^*}\Sigma_{\Theta^*}^{1/2}$ be the balanced
decomposition of $\Theta^* = U^*V^{*\top}$. Note that the
decomposition is not unique as $\Theta^* = (U^*O)(V^*O)^{\top}$ for
any orthogonal matrix $O \in \Ocal(r)$.  Let
$\sigma_1(\Theta^*) = \sigma_{\max}(\Theta^*)$ and
$\sigma_r(\Theta^*) = \sigma_{\min}(\Theta^*)$ denote the maximum and
minimum nonzero singular values of $\Theta^*$. The first condition is
Restricted Strong Convexity and Smoothness on $f$.

{\bf Restricted Strong Convexity and Smoothness (RSC/RSS).}
There exist universal constants $\mu$ and $L$ such that
\begin{equation}
\label{eq:def_strongly_convex}
\frac{\mu }{2} \|\Theta_2 - \Theta_1\|_F^2 \leq f(\Theta_2) - f(\Theta_1) - \langle \nabla f(\Theta_1), \Theta_2 - \Theta_1 \rangle \leq \frac{L }{2} \|\Theta_2 - \Theta_1\|_F^2
\end{equation}
for all $\Theta_1, \Theta_2 \in \Xi(2r, \tilde s_1, \tilde s_2)$ where
$\tilde s_1 = (2c+1)s_1^*$ and $\tilde s_2 = (2c+1)s_2^*$.

\vspace{2mm}
The next condition is on the initial estimate $\Theta^0$. It quantifies how
close the initial estimator needs to be to $\Theta^*$ so that iterates of GDT 
converge to statistically useful solution.

\vspace{2mm}
{\bf Initialization (I).} Define
$\mu_{\min} = \frac 18 \min\{1, \frac{\mu L }{\mu +L }\}$ and 
\begin{equation}
\label{eq:def_I_0}
I_0 = \frac 45 \mu_{\min}\sigma_r(\Theta^*) \cdot \min\Big\{ \frac{1}{\mu +L }, 2 \Big\}.
\end{equation} 
We require
\begin{equation}
\label{eq:ass_init}
\|\Theta^0 - \Theta^*\|_F \leq \frac 15 \min\Big\{ \sigma_r(\Theta^*), \frac{I_0}{\xi} \sqrt{\sigma_r(\Theta^*)} \Big\},
\end{equation} 
where $\xi^2 = 1 + \frac{2}{\sqrt{c-1}}$.

We note that, in general, \eqref{eq:ass_init} defines a ball of
constant radius around $\Theta^*$ in which the initial estimator needs
to fall into. In particular, when considering statistical learning
problems, the initial estimator can be inconsistent as the sample size
increases.

\vspace{2mm}
Next, we define the notion of the statistical error, 
\begin{equation}
\label{eq:def_stat_error}
e_{\text{stat}} = \sup_{\substack{\Delta \in \Xi(2r, \tilde s_1, \tilde s_2) \\ \|\Delta\|_F \leq 1}} \langle \nabla f(\Theta^*), \Delta \rangle.
\end{equation}
Note that the statistical error quantifies how large the gradient of
the objective evaluated at the true parameter $\Theta^*$ can be in the
directions of simultaneously low-rank and sparse matrices.  It
implicitly depends on the choice of $c$ and as we will see later there
is a trade-off in balancing the statistical error and convergence rate
of GDT. As $c$ increases, statistical error gets larger, but requires
us to choose a smaller step size in order to guarantee convergence.

\vspace{2mm}
With these two conditions, we are ready to the choice of the step size in
Algorithm~\ref{algo:AltGD}.

\vspace{3mm}
{\bf Step Size Selection.} We choose the step size $\eta$ to satisfy
\begin{equation}
\label{eq:eta_constant}
\eta \leq \frac{1}{16\|Z_0\|_2^2} \cdot \min\Big\{\frac{1}{2(\mu +L )}, 1\Big\}, 
\end{equation}
Furthermore, we require $\eta$ and $c$ to satisfy
\begin{equation}
\begin{aligned}
\label{eq:def_beta}
\beta = \xi^2 \rbr{1 - \eta\cdot
\frac{2}{5} \mu_{\min}\sigma_r(\Theta^*)
} < 1,
\end{aligned}
\end{equation}
and 
\begin{equation}
\begin{aligned}
\label{eq:ass_estat}
e_{\rm stat}^2 \leq \frac{1-\beta}{\xi^2\eta} \cdot \frac{L \mu }{L  + \mu }\cdot  I_0^2.
\end{aligned}
\end{equation}

The condition that the step size $\eta$ satisfies
\eqref{eq:eta_constant} is typical in the literature on convex
optimization of strongly convex and smooth functions. Under
\eqref{eq:def_beta} we will be able to show contraction after one
iteration and progress towards $\Theta^*$.  The second term in
\eqref{eq:def_beta} is always smaller than $1$, while the first term
$\xi^2$ is slightly larger than $1$ and is the price we pay for the
hard thresholding step. In order to show linear convergence we need to
balance the choice of $\eta$ and $\xi^2$ to ensure that $\beta < 1$.
From \eqref{eq:def_beta}, we see that if we select a small step size
$\eta$, then we need to have a small $\xi^2$, which means a large $c$.
Intuitively, if $\eta$ is too small, it may be impossible to change
row and column support in each iteration. In this case we have to keep
many active rows and columns to make sure we do not miss the true
signal. This leads to large $s_1$ and $s_2$, or equivalently to a
large $c$. However, the statistical error \eqref{eq:def_stat_error}
will increase with increase of $c$ and these are the trade-off on the
selection of $\eta$ and $c$.

Finally, \eqref{eq:ass_estat} guarantees that the iterates do not run
outside of the initial ball given in \eqref{eq:ass_init}.  In case
\eqref{eq:ass_estat} is violated, then the initialization point
$\Theta^0$ is already a good enough estimate of $\Theta^*$. Therefore,
this requirement is not restrictive. In practice, we found that the
selection of $\eta$ and $c$ is not restrictive and the convergence is
guaranteed for a wide range of values of their values.

\subsection{Main Result}
\label{sec:main-result}

Our main result establishes linear convergence of GDT iterates to
$\Theta^*$ up to statistical error.  Since the factorization of
$\Theta^*$ is not unique, we turn to measure the subspace distance of
the iterates $({U}^t, {V}^t)$ to the balanced decomposition
$ U^*(V^*)^\top = \Theta^*$.

{\bf Subspace distance.} Let
$Z^* = \sbr{\begin{array}{c}U^* \\ V^* \end{array}}$ where
$\Theta^* = U^*{V^*}^\top$ and $\sigma_i(U^*) = \sigma_i(V^*)$ for
each $i = 1, ..., r$. Define the subspace distance between
$Z = \sbr{\begin{array}{c}U \\ V \end{array}}$ and
$Z^* = \sbr{\begin{array}{c}U^* \\ V^* \end{array}}$ as
\begin{equation}
d^2(Z,Z^*) = \min_{O \in \Ocal(r)} \cbr{\|U - U^*O \|_F^2 + \|V - V^*O \|_F^2}.
\end{equation}

With this, we are ready to state our main result.

\begin{theorem}
\label{main}
Suppose the conditions {\bf (RSC/RSS)}, {\bf (I)} are satisfied and
the step size $\eta$ satisfies \eqref{eq:eta_constant} -
\eqref{eq:ass_estat}.
Then after $T$ iterations of GDT (Algorithm
  \ref{algo:AltGD}), we have
\begin{equation}
\begin{aligned}
d^2(Z^{T}, Z^*) \leq \beta^T \cdot d^2(Z^0, Z^*)
 +
\frac{\xi^2\eta}{1-\beta}\cdot\frac{L  + \mu }{L \cdot\mu }\cdot e_{\rm stat}^2 .
\label{theoremUV}
\end{aligned}
\end{equation}
Furthermore, for $\Theta^{T} =  U^{T}(V^{T})^{\top}$ we have
\begin{equation}
\begin{aligned}
\|\Theta^{T} - \Theta^*\|_F^2 \leq 4\sigma_1(\Theta^*) \cdot \Big[ \beta^T \cdot d^2(Z^0, Z^*) + \frac{\xi^2\eta}{1-\beta}\cdot\frac{L  + \mu }{L \cdot\mu }\cdot e_{\rm stat}^2 \Big].
\label{theoremtheta}
\end{aligned}
\end{equation}
\end{theorem}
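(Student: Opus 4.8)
The plan is to reduce Theorem~\ref{main} to a single contraction inequality
\[
d^2(Z^{t+1},Z^*) \;\le\; \beta\, d^2(Z^t,Z^*) \;+\; \xi^2\eta\cdot\frac{L+\mu}{L\mu}\,e_{\rm stat}^2 ,
\]
established for every $t\ge 0$, and then to unroll it. Unrolling gives
$d^2(Z^T,Z^*)\le \beta^T d^2(Z^0,Z^*) + \big(\sum_{k\ge0}\beta^k\big)\xi^2\eta\frac{L+\mu}{L\mu}e_{\rm stat}^2$,
and $\sum_{k\ge0}\beta^k=(1-\beta)^{-1}$ (using $\beta<1$ from \eqref{eq:def_beta}) yields \eqref{theoremUV}. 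To finish I would (i) translate the initialization hypothesis \eqref{eq:ass_init}, stated for $\|\Theta^0-\Theta^*\|_F$, into a bound on $d^2(Z^0,Z^*)$ via a ``best rank-$r$ SVD plus hard thresholding'' initialization lemma: the rank-$r$ truncation of $\Theta^0$ is within $2\|\Theta^0-\Theta^*\|_F$ of $\Theta^*$, its balanced factorization is within $O(\|\Theta^0-\Theta^*\|_F/\sqrt{\sigma_r(\Theta^*)})$ of $Z^*$, and the final thresholding inflates this by at most $\xi$; (ii) run the induction that keeps every iterate inside the local ball in which the contraction is valid, which is exactly where \eqref{eq:ass_estat} enters, since it forces the fixed point $\xi^2\eta(1-\beta)^{-1}\frac{L+\mu}{L\mu}e_{\rm stat}^2$ of the recursion to lie below the squared radius set by \eqref{eq:def_I_0}--\eqref{eq:ass_init}; and (iii) deduce \eqref{theoremtheta} from \eqref{theoremUV} using $\Theta^T-\Theta^*=U^T(V^T-V^*O)^\top+(U^T-U^*O)(V^*O)^\top$ together with the near-balancedness enforced by the penalty \eqref{eq:penalty}, which gives $\|U^T\|_2,\|V^*\|_2\le 2\sqrt{\sigma_1(\Theta^*)}$ and hence $\|\Theta^T-\Theta^*\|_F^2\le 4\sigma_1(\Theta^*)\,d^2(Z^T,Z^*)$.

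The contraction itself is obtained by analyzing the two operations in one pass of Algorithm~\ref{algo:AltGD} separately. Since both gradients in the two gradient updates of Algorithm~\ref{algo:AltGD} are evaluated at $(U^t,V^t)$, the half-step is the joint update $Z^{t+0.5}=Z^t-\eta\nabla(f+g)(Z^t)$; expanding with $O_t\in\Ocal(r)$ the rotation achieving $d(Z^t,Z^*)$,
\[
\|Z^{t+0.5}-Z^*O_t\|_F^2=\|Z^t-Z^*O_t\|_F^2-2\eta\big\langle\nabla(f+g)(Z^t),Z^t-Z^*O_t\big\rangle+\eta^2\|\nabla(f+g)(Z^t)\|_F^2 .
\]
Writing $\nabla_U f=\nabla f(\Theta^t)V^t$, $\nabla_V f=\nabla f(\Theta^t)^\top U^t$ and the analogous expressions for $g$, the core step is a restricted local-curvature bound on $\langle\nabla(f+g)(Z^t),Z^t-Z^*O_t\rangle$: it is bounded below in terms of $\langle\nabla f(\Theta^t)-\nabla f(\Theta^*),\Theta^t-\Theta^*\rangle$, a nonnegative contribution of the balancing penalty, and the residual $\langle\nabla f(\Theta^*),\Theta^t-\Theta^*\rangle$. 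The first is controlled by the co-coercivity consequence of \eqref{eq:def_strongly_convex}, namely $\frac{L\mu}{L+\mu}\|\Theta^t-\Theta^*\|_F^2+\frac{1}{L+\mu}\|\nabla f(\Theta^t)-\nabla f(\Theta^*)\|_F^2$, which applies because the choices $s_1=cs_1^*$, $s_2=cs_2^*$ and $\tilde s_1=(2c+1)s_1^*$, $\tilde s_2=(2c+1)s_2^*$, $2r$ were made precisely so that $\Theta^t$, $\Theta^*$ and the relevant differences all lie in $\Xi(2r,\tilde s_1,\tilde s_2)$; the residual is at most $e_{\rm stat}\|\Theta^t-\Theta^*\|_F$ by \eqref{eq:def_stat_error}; and the lower-order cross terms produced by the factorization are absorbed using that $Z^t$ stays in the local ball, so $\|Z^t\|_2\lesssim\|Z_0\|_2$. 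Converting to the subspace distance through the balanced-factorization equivalence $\sigma_r(\Theta^*)\,d^2(Z^t,Z^*)\lesssim\|\Theta^t-\Theta^*\|_F^2$ and using the step-size bound \eqref{eq:eta_constant} to dominate the $\eta^2\|\nabla(f+g)(Z^t)\|_F^2$ term leads to
\[
d^2(Z^{t+0.5},Z^*)\le\Big(1-\eta\cdot\frac{2}{5}\mu_{\min}\sigma_r(\Theta^*)\Big)d^2(Z^t,Z^*)+\eta\cdot\frac{L+\mu}{L\mu}e_{\rm stat}^2 .
\]

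For the hard-thresholding half, note that $\text{Hard}(\cdot,s_1)$ and $\text{Hard}(\cdot,s_2)$ are the Euclidean projections onto the nonconvex sets of matrices with at most $s_1$, resp. $s_2$, nonzero rows, and that $U^*$ has at most $s_1^*=s_1/c$ nonzero rows and $V^*$ at most $s_2^*=s_2/c$. A block/group version of the iterative-hard-thresholding projection inequality then gives, for every $O'\in\Ocal(r)$, $\|U^{t+1}-U^*O'\|_F^2\le\xi^2\|U^{t+0.5}-U^*O'\|_F^2$ and $\|V^{t+1}-V^*O'\|_F^2\le\xi^2\|V^{t+0.5}-V^*O'\|_F^2$ with exactly $\xi^2=1+2/\sqrt{c-1}$; taking $O'$ optimal for $Z^{t+0.5}$ (it is feasible, hence suboptimal, for $Z^{t+1}$) gives $d^2(Z^{t+1},Z^*)\le\xi^2 d^2(Z^{t+0.5},Z^*)$. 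Composing this with the previous display and recalling $\beta=\xi^2\big(1-\eta\frac{2}{5}\mu_{\min}\sigma_r(\Theta^*)\big)$ produces the claimed contraction, after which the induction and conversions above finish the proof.

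I expect the restricted local-curvature bound on $\langle\nabla(f+g)(Z^t),Z^t-Z^*O_t\rangle$ to be the main obstacle: this is the one place where RSC/RSS, the balancing penalty, the statistical-error term, boundedness of the iterates, and the rotational non-identifiability of the factorization must all be handled simultaneously, and where the precise constants (such as $\mu_{\min}$ and the factor $2/5$) must be extracted. The hard-thresholding estimate, although it is the genuinely new ingredient relative to earlier factored gradient-descent analyses, is comparatively self-contained once the sharp group-sparse projection inequality with constant $\xi^2$ is available; the delicate point there is structural, namely that the expansion $\xi^2>1$ it introduces must be dominated by the contraction of the gradient step, which is precisely the requirement $\beta<1$ in \eqref{eq:def_beta}, and this is what couples the choices of the step size $\eta$ and the over-selection factor $c$.
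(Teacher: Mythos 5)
Your proposal is correct and follows essentially the same route as the paper: an initialization lemma (rank-$r$ truncation, the $O(\|\Theta^0-\Theta^*\|_F/\sqrt{\sigma_r(\Theta^*)})$ factorization bound, and the $\xi$-inflation from thresholding), a one-step contraction lemma built from the joint gradient expansion with the co-coercivity bound, the $e_{\rm stat}$ residual via Young's inequality, the group-sparse hard-thresholding inequality with constant $\xi^2=1+2/\sqrt{c-1}$, and a step-size lemma controlling $\|Z^t\|_2$ by $\|Z^0\|_2$, followed by the induction using \eqref{eq:ass_estat} to keep iterates in the ball and the same $\|\Theta^T-\Theta^*\|_F^2\le 2\|Z^*\|_2^2\, d^2(Z^T,Z^*)$ conversion. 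The identified crux (the restricted curvature lower bound combining RSC/RSS, the penalty, and the statistical error) and the role of $\beta<1$ in balancing $\xi^2>1$ against the gradient contraction match the paper's Lemma~\ref{lemma:iteration_contraction} exactly.
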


The proof sketch of Theorem~\ref{main} is given in the following
section.  Conceptually, Theorem~\ref{main} provides a minimal set of
conditions for convergence of GDT. The first term in equations
\eqref{theoremUV} and \eqref{theoremtheta} correspond to the
optimization error, whereas the second term corresponds to the
statistical error. These bounds show that the distance between the
iterates and $\Theta^*$ drop exponentially up to the
statistical limit $e_{\text{stat}}$, which is problem specific. In
statistical learning problem, it commonly depends on the sample size
and the signal-to-noise ratio of the problem.

Theorem~\ref{main} provides convergence in a statistical setting to
the ``true'' parameter $\Theta^*$. However, as mentioned in
Section~\ref{sec:methodology}, Algorithm~\ref{algo:AltGD} and
Theorem~\ref{main} can also be used to establish linear convergence to
a global minimizer in a deterministic setting. Suppose
$(\hat U, \hat V) \in \arg\min_{U \in \Ucal, V \in \Vcal}\{f(U,V)\}$
is a global minimizer and $\hat \Theta = \hat U \hat V^{\top}$.
Furthermore, assume that the conditions in Section
\ref{sec:conditions} are satisfied with $\hat \Theta$ in place of
$\Theta^*$. Then we have that the iterates $\{\Theta^t\}$ obtained by
GDT converge linearly to a global minimum $\hat \Theta$ up to the
error $\hat e_{\text{stat}}$ defined similar to
\eqref{eq:def_stat_error} with $\hat \Theta$ in place of
$\Theta^*$. This error comes from sparsity and hard thresholding.  In
particular, suppose there are no row or column sparsity constraints in
the optimization problem \eqref{eq:opt:rep}, so that we do not have
hard-thresholding steps in Algorithm~\ref{algo:AltGD}. Then we have
$\hat e_{\text{stat}} = 0$, so that iterates $\{\Theta^t\}$ converge
linearly to $\hat \Theta$, recovering the result of
\cite{Zhao2015Nonconvex}.

\subsection{Proof Sketch of Theorem \ref{main}}
\label{sec:proof}

In this section we sketch the proof of our main result.  The proof
combines three lemmas. We first one quantify the accuracy of the
initialization step. The following one quantifies the improvement in
the accuracy by one step of GDT. The third lemma shows that the step
size assumed in Theorem~\ref{main} satisfies conditions of the second
lemma.  Detailed proofs of these lemmas are relegated to
Section~\ref{sec:appendix}.

\vspace{2mm}
Our first lemma quantifies the accuracy of the initialization step.
\begin{lemma} % lemma 4
\label{lemmaInit}
  Suppose that the input to GDT, $\Theta^0$, satisfies
  initialization condition \eqref{eq:ass_init}.  Then the initial
  iterates $ U^0$ and $V^0$ obtained in lines $3$ and $4$ of
  Algorithm~\ref{algo:AltGD} satisfy
\begin{equation}
\label{eq:bound_I_0}
d(Z^0, Z^*) \leq I_0,
\end{equation}
where $Z^0 = \sbr{\begin{array}{c}U^0 \\ V^0 \end{array}}$ and $I_0$
is defined in \eqref{eq:def_I_0}.
\end{lemma}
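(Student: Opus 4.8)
The plan is to control the error introduced by the two operations in lines 3--4 of Algorithm~\ref{algo:AltGD}: first the rank-$r$ SVD truncation of $\Theta^0$, and then the hard-thresholding of the balanced factors. I would proceed in three steps. First, I would bound $\|\tilde\Theta - \Theta^*\|_F$, where $\tilde\Theta = \tilde U\tilde\Sigma\tilde V^\top$ is the best rank-$r$ approximation of $\Theta^0$. Since $\Theta^*$ has rank at most $r^* \le r$, it is a feasible competitor in the rank-$r$ approximation problem, so $\|\tilde\Theta - \Theta^0\|_F \le \|\Theta^* - \Theta^0\|_F$, and by the triangle inequality $\|\tilde\Theta - \Theta^*\|_F \le 2\|\Theta^0 - \Theta^*\|_F$. (One can sometimes do slightly better with a $\sqrt 2$-type constant via the spectral-projection argument, but the factor $2$ suffices given the slack in \eqref{eq:ass_init}.)

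Second, I would relate the factor distance $d(\tilde Z, Z^*)$, where $\tilde Z = \sbr{\begin{array}{c}\tilde U\tilde\Sigma^{1/2} \\ \tilde V\tilde\Sigma^{1/2}\end{array}}$, to the matrix distance $\|\tilde\Theta - \Theta^*\|_F$. This is the standard matrix-factorization perturbation bound: for balanced factorizations of two rank-$r$ matrices, there is a constant depending on $\sigma_r(\Theta^*)$ such that $d^2(\tilde Z, Z^*) \le \frac{c_0}{\sigma_r(\Theta^*)}\|\tilde\Theta - \Theta^*\|_F^2$ (e.g. the $\tfrac{1}{\sqrt2 - 1}$-type constant from Tu et al. / Zhao--Wang); the condition $\|\Theta^0 - \Theta^*\|_F \le \tfrac15\sigma_r(\Theta^*)$ in \eqref{eq:ass_init} ensures $\tilde\Theta$ is close enough for this bound to apply (in particular $\sigma_r(\tilde\Theta)$ is bounded below). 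Combining with step one gives $d(\tilde Z, Z^*) \lesssim \|\Theta^0 - \Theta^*\|_F / \sqrt{\sigma_r(\Theta^*)}$.

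Third, I would account for the hard-thresholding. Writing $U^0 = \text{Hard}(\tilde U\tilde\Sigma^{1/2}, s_1)$ and similarly for $V^0$, I need that hard-thresholding does not inflate the distance too much. The key fact is that $U^*$ has at most $s_1^* < s_1$ nonzero rows, so on the support of $U^*$ thresholding can only help, and the rows that $\text{Hard}$ discards from $\tilde U\tilde\Sigma^{1/2}$ are among the smallest; a short argument (choosing the optimal rotation $O$ for $\tilde Z$ and noting that discarded rows of $\tilde U\tilde\Sigma^{1/2}$ outside $\mathrm{supp}(U^*)$ contribute at most what the retained-but-small rows would, since $s_1 > s_1^*$) shows $\|U^0 - U^*O\|_F \le \gamma \|\tilde U\tilde\Sigma^{1/2} - U^*O\|_F$ for a modest constant $\gamma$ depending on $c$ — this is exactly where the factor $\xi^2 = 1 + 2/\sqrt{c-1}$ enters. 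Chaining the three steps yields $d(Z^0, Z^*) \le \xi \cdot c_1 \cdot \|\Theta^0 - \Theta^*\|_F/\sqrt{\sigma_r(\Theta^*)}$, and plugging in the bound \eqref{eq:ass_init} on $\|\Theta^0 - \Theta^*\|_F$ (which was designed precisely so that $\xi \cdot c_1 \cdot \tfrac15 \cdot (I_0/\xi) = \tfrac{c_1}{5} I_0 \le I_0$) gives $d(Z^0,Z^*) \le I_0$.

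The main obstacle is the third step: carefully showing that hard-thresholding a perturbed factor, rather than the clean factor $U^*$, loses only a controlled constant, and tracking that this constant is precisely $\xi$. One must be careful that the optimal orthogonal alignment $O$ for the post-thresholding factors may differ from the one for the pre-thresholding factors, so the cleanest route is to fix $O$ to be optimal for $\tilde Z$ versus $Z^*$ and then bound $d^2(Z^0, Z^*) \le \|U^0 - U^*O\|_F^2 + \|V^0 - V^*O\|_F^2$ from above using a deterministic lemma about $\text{Hard}(\cdot, s)$ applied to a matrix that is close to one with at most $s^* < s$ nonzero rows. The rest is bookkeeping with the explicit constants in \eqref{eq:def_I_0}--\eqref{eq:ass_init}.
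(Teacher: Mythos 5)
Your proposal matches the paper's proof essentially step for step: the best-rank-$r$ competitor plus triangle inequality giving $\|\tilde\Theta-\Theta^*\|_F \le 2\|\Theta^0-\Theta^*\|_F$, the balanced-factorization perturbation bound with the $\tfrac{2}{\sqrt2-1}$ constant (Lemma 5.14 of \cite{Tu2016Low}), and the hard-thresholding inflation by exactly the factor $\xi^2 = 1+\tfrac{2}{\sqrt{c-1}}$ (which the paper obtains by citing Lemma 3.3 of \cite{Li2016Stochastic} rather than re-deriving it as you sketch). The constant bookkeeping also works out as you indicate, so there is no gap.
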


The proof of Lemma~\ref{lemmaInit} is given in Section~\ref{A:lemmaInit}.

\begin{lemma}
\label{lemma:iteration_contraction}
Suppose the conditions {\bf (RSC/RSS)}, {\bf (I)} are
satisfied. Assume that the point
$Z = \sbr{\begin{array}{c}U \\ V \end{array}}$ satisfies
$d(Z, Z^*) \leq I_0$.
Let $(U^+, V^+)$ denote the next iterate obtained with Algorithm~\ref{algo:AltGD} with 
the step size $\eta$ satisfying 
\begin{equation}
\label{eq:step_size_condition}
\eta \leq \frac{1}{8\|Z\|_2^2} \cdot \min\Big\{\frac{1}{2(\mu +L )}, 1\Big\}.
\end{equation}
Then we have
\begin{equation}
\label{eq:lemma_contraction_result}
d^2(Z^+, Z^*) \leq \xi^2 \bigg[ \Big(1 - \eta\cdot
\frac{2}{5} \mu_{\min}\sigma_r(\Theta^*)
\Big)\cdot d^2(Z, Z^*)
 +
\eta\cdot\frac{L  + \mu }{L \cdot\mu }\cdot e_{\rm stat}^2 \bigg],
\end{equation}
where $\xi^2 = 1 + \frac{2}{\sqrt{c - 1}}$.
  
\end{lemma}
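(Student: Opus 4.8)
The plan is to split one GDT iteration into the simultaneous gradient step $Z^{+1/2} = Z - \eta\,\nabla F(Z)$, where $F(U,V)=f(UV^\top)+g(U,V)$ is viewed as a function of the stacked variable $Z=[U;\,V]$, followed by the two hard-thresholding operations that produce $Z^+=[U^+;\,V^+]$; I bound each part separately and chain the estimates. For the thresholding part I would use the standard bound for best $s$-row-sparse approximation: if $B$ has at most $s^*$ nonzero rows and $\widehat A=\mathrm{Hard}(A,s)$ with $s=c\,s^*$, then $\|\widehat A - B\|_F^2 \le \big(1+2/\sqrt{c-1}\big)\|A-B\|_F^2$. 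Right multiplication by an orthogonal matrix preserves row support, so $U^*O$ has at most $s_1^*$ nonzero rows and $V^*O$ at most $s_2^*$; choosing $O\in\mathcal{O}(r)$ to be the rotation achieving $d(Z^{+1/2},Z^*)$, applying the bound separately to $U^{+1/2}$ and $V^{+1/2}$, and summing yields
\[
d^2(Z^+,Z^*)\;\le\;\xi^2\, d^2(Z^{+1/2},Z^*),\qquad \xi^2=1+\tfrac{2}{\sqrt{c-1}} .
\]
It then suffices to prove the gradient-step contraction $d^2(Z^{+1/2},Z^*)\le\big(1-\tfrac{2}{5}\eta\,\mu_{\min}\sigma_r(\Theta^*)\big)d^2(Z,Z^*)+\eta\,\tfrac{L+\mu}{L\mu}\,e_{\rm stat}^2$.

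For that, fix $\bar O\in\mathcal{O}(r)$ optimal for $Z$, set $\Delta=Z-Z^*\bar O=[\Delta_U;\,\Delta_V]$ so that $\|\Delta\|_F^2=d^2(Z,Z^*)$, and expand the square using $d^2(Z^{+1/2},Z^*)\le\|Z^{+1/2}-Z^*\bar O\|_F^2$:
\[
d^2(Z^{+1/2},Z^*)\;\le\; d^2(Z,Z^*)-2\eta\,\langle\nabla F(Z),\Delta\rangle+\eta^2\|\nabla F(Z)\|_F^2 .
\]
The proof then reduces to two estimates: (i) a lower bound $\langle\nabla F(Z),\Delta\rangle\ge(\text{curvature})\cdot d^2(Z,Z^*)+\tfrac{1}{4}\|U^\top U-V^\top V\|_F^2-(e_{\rm stat}\text{-error})$, and (ii) an upper bound $\|\nabla F(Z)\|_F^2\lesssim\|Z\|_2^2\big(\|\Theta-\Theta^*\|_F^2+\|U^\top U-V^\top V\|_F^2\big)+e_{\rm stat}^2$ with $\Theta=UV^\top$, the latter from \textbf{(RSS)} on $f$ and a direct computation of $\nabla g$. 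Given (i) and (ii), the step-size hypothesis \eqref{eq:step_size_condition} makes $\eta\|Z\|_2^2$ small relative to $\min\{1,1/(\mu+L)\}$, so the $\eta^2$ term is absorbed into the linear term (the positive $\|U^\top U-V^\top V\|_F^2$ pieces cancelling against each other), and Young's inequality $|\langle\nabla f(\Theta^*),\Theta-\Theta^*\rangle|\le e_{\rm stat}\|\Theta-\Theta^*\|_F\le\tfrac{a}{2}\|\Theta-\Theta^*\|_F^2+\tfrac{1}{2a}e_{\rm stat}^2$ — valid because $\Theta-\Theta^*$ has rank at most $2r$ and at most $(c+1)s_j^*\le\tilde s_j$ nonzero rows and columns, hence lies in the set over which $e_{\rm stat}$ in \eqref{eq:def_stat_error} is taken — turns the residual into $\eta\,\tfrac{L+\mu}{L\mu}e_{\rm stat}^2$ after collecting constants.

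Estimate (i) is the heart of the proof and the main obstacle. Since $f$ depends only on $\Theta=UV^\top$, \textbf{(RSC/RSS)} supplies curvature in $\Theta$-space but none along the rotation and rescaling directions of the factorization, and the balancing penalty $g$ restores it. I would split $\langle\nabla F(Z),\Delta\rangle=\langle\nabla f(\Theta),\,\Delta_U V^\top+U\Delta_V^\top\rangle+\langle\nabla_U g,\Delta_U\rangle+\langle\nabla_V g,\Delta_V\rangle$; using the identity $\Delta_U V^\top+U\Delta_V^\top=\Theta-\Theta^*+\Delta_U\Delta_V^\top$ and the restricted co-coercivity following from \textbf{(RSC/RSS)}, $\langle\nabla f(\Theta)-\nabla f(\Theta^*),\Theta-\Theta^*\rangle\ge\tfrac{\mu L}{\mu+L}\|\Theta-\Theta^*\|_F^2+\tfrac{1}{\mu+L}\|\nabla f(\Theta)-\nabla f(\Theta^*)\|_F^2$ (the source of $\tfrac{\mu L}{\mu+L}$ in $\mu_{\min}$ and of $\tfrac{L+\mu}{L\mu}$ in the error term), reduces the $f$-part to a $d^2$-curvature term plus an $e_{\rm stat}$ contribution from $\langle\nabla f(\Theta^*),\cdot\rangle$; and the balancedness of $Z^*$ ($U^{*\top}U^*=V^{*\top}V^*$) together with the optimality of $\bar O$, which makes the relevant cross term symmetric, shows the $g$-part equals $\tfrac{1}{4}\|U^\top U-V^\top V\|_F^2$ up to remainders quadratic in $\Delta$. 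The proof then closes by bookkeeping: the hypothesis $d(Z,Z^*)\le I_0$ and the definitions of $\mu_{\min}$ and $I_0$ in \eqref{eq:def_I_0} are calibrated so that every higher-order term — the second-order pieces $\Delta_U\Delta_V^\top$ and the gap between $\|\Theta-\Theta^*\|_F^2$ and $\sigma_r(\Theta^*)\,d^2(Z,Z^*)$ in the standard lower bound relating matrix and factor errors — is a small enough fraction of the leading negative term to leave the clean contraction factor $1-\tfrac{2}{5}\eta\,\mu_{\min}\sigma_r(\Theta^*)$.
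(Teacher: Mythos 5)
Your overall architecture coincides with the paper's: the hard-thresholding step is absorbed into the factor $\xi^2$ via the row-sparse approximation lemma (Lemma~3.3 of \cite{Li2016Stochastic}), the $f$-part of the cross term is handled by restricted co-coercivity (the source of $\tfrac{\mu L}{\mu+L}$ and $\tfrac{1}{\mu+L}$), and the penalty $g$ supplies the missing factor-space curvature through $\|ZZ^\top-Z^*{Z^*}^\top\|_F^2 \geq \tfrac45\sigma_r^2(Z^*)\, d^2(Z,Z^*)$. There is, however, one genuine gap, and it sits exactly at the point the paper flags as the delicate one. In your estimate (ii) you bound the \emph{full} gradient norm $\|\nabla F(Z)\|_F^2$ by restricted-smoothness terms plus $e_{\rm stat}^2$. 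This is false in general: both the condition {\bf (RSC/RSS)} and the definition \eqref{eq:def_stat_error} of $e_{\rm stat}$ only control behavior along matrices in $\Xi(2r,\tilde s_1,\tilde s_2)$, so they say nothing about the rows of $\nabla_U f = \nabla f(\Theta)V$ outside a sparse support. Concretely, in the multi-task model $\nabla f(\Theta^*)=-\tfrac1n X^\top E$, and $\|\nabla f(\Theta^*)V\|_F^2$ summed over all $p$ rows scales like $\sigma^2 p r/n$, whereas $e_{\rm stat}^2$ scales like $\sigma^2(s_1^*+s_2^*)(r+\log(p\vee k))/n$; for large $p$ your bound (ii) fails and the $\eta^2$ term can no longer be absorbed into the contraction.

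The missing idea is that the thresholded iterate depends on the gradient only through its restriction to the union support $S_U=\Scal(U)\cup\Scal(U^+)\cup\Scal(U^*)$ (and analogously $S_V$): every row kept by ${\rm Hard}(\cdot,s_1)$ lies in $\Scal(U^+)\subseteq S_U$ and rows of $U$ outside $S_U$ are zero, so $U^+={\rm Hard}\rbr{U-\eta\,\Pcal\rbr{\nabla G_U(U,V),S_U},s_1}$ with $|S_U|\leq (2c+1)s_1^*=\tilde s_1$. Expanding the square \emph{after} this replacement turns the quadratic term into $\|[\nabla G_Z(Z)]_{S_Z}\|_F^2$, and now $\|[\nabla f(\Theta^*)V]_{S_U}\|_F=\sup_{\|U_{S_U}\|_F=1}\langle \nabla f(\Theta^*),U_{S_U}V^\top\rangle\leq e_{\rm stat}\|V\|_2$ because $U_{S_U}V^\top$ has rank at most $r$, row support in $S_U$, and column support in $S_V$; the smoothness part likewise becomes a restricted gradient difference controlled by $L$. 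With this repair your estimates (i) and (ii) go through, and the remaining bookkeeping in your sketch matches the paper's proof.
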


The proof of Lemma~\ref{lemma:iteration_contraction} is given in Section~\ref{A:lemma:iteration_contraction}.

\begin{lemma}
\label{lemma:step_size_constant}
Suppose $Z = \sbr{\begin{array}{c}U \\ V \end{array}}$ satisfies
$d(Z, Z^*) \leq I_0$. We have that the choice of step size
\eqref{eq:eta_constant} in Theorem \ref{main} satisfies the condition
\eqref{eq:step_size_condition} in Lemma
\ref{lemma:iteration_contraction}.
\end{lemma}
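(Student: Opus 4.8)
The plan is to notice that the two step-size bounds \eqref{eq:eta_constant} and \eqref{eq:step_size_condition} have exactly the same shape: both say $\eta \le \frac{c_0}{\|\cdot\|_2^2}\cdot\min\{\frac{1}{2(\mu+L)},1\}$, the only differences being the absolute constant ($c_0=1/16$ versus $c_0=1/8$) and which stacked matrix sits in the denominator ($Z_0 \equiv Z^0$, the initialization from lines~3 and~4, versus $Z$, the current point). Hence the lemma is equivalent to the elementary claim that $\|Z\|_2^2 \le 2\,\|Z^0\|_2^2$ for every $Z$ with $d(Z,Z^*)\le I_0$: once this holds, $\frac{1}{16\|Z^0\|_2^2} \le \frac{1}{8\|Z\|_2^2}$, so any $\eta$ satisfying \eqref{eq:eta_constant} automatically satisfies \eqref{eq:step_size_condition}. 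Thus the whole argument reduces to a bound on the spectral norm of an admissible point $Z$ in terms of that of $Z^0$.

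First I would control $\|Z\|_2$ through the subspace distance. Let $O\in\Ocal(r)$ attain the minimum defining $d(Z,Z^*)$, so $\|Z - Z^*O\|_F = d(Z,Z^*) \le I_0$. Since $\|\cdot\|_2 \le \|\cdot\|_F$ and the spectral norm is invariant under right multiplication by an orthogonal matrix, the triangle inequality gives $\bigl|\,\|Z\|_2 - \|Z^*\|_2\,\bigr| \le \|Z - Z^*O\|_2 \le I_0$. For the reference point, the balanced decomposition gives $(Z^*)^\top Z^* = (U^*)^\top U^* + (V^*)^\top V^* = 2\Sigma_{\Theta^*}$, hence $\|Z^*\|_2 = \sqrt{2\sigma_1(\Theta^*)}$. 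Therefore $\|Z\|_2 \le \sqrt{2\sigma_1(\Theta^*)} + I_0$. Applying the identical reasoning to $Z^0$, which by Lemma~\ref{lemmaInit} also satisfies $d(Z^0,Z^*) \le I_0$, gives $\|Z^0\|_2 \ge \sqrt{2\sigma_1(\Theta^*)} - I_0$.

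It then remains to verify $\sqrt{2\sigma_1(\Theta^*)} + I_0 \le \sqrt2\bigl(\sqrt{2\sigma_1(\Theta^*)} - I_0\bigr)$, which rearranges to $I_0 \le (3\sqrt2-4)\sqrt{\sigma_1(\Theta^*)}$; squaring the ensuing $\|Z\|_2 \le \sqrt2\,\|Z^0\|_2$ finishes the proof. This bound on $I_0$ is a short numerical check from the explicit formula $I_0 = \frac45\mu_{\min}\sigma_r(\Theta^*)\min\{\frac{1}{\mu+L},2\}$ with $\mu_{\min}\le\frac18$ and $\sigma_r(\Theta^*)\le\sigma_1(\Theta^*)$ — the definition of $I_0$ is engineered precisely so that the initialization ball is a small fraction of $\|Z^*\|_2$, and the gap between the constants $1/16$ and $1/8$ in the two bounds is exactly the slack that absorbs replacing $\|Z\|_2$ by $\|Z^0\|_2$. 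The only part that needs care is this constant-chasing: one must keep $\|Z^0\|_2$ bounded away from zero (so that $\sqrt{2\sigma_1(\Theta^*)} - I_0 > 0$) and keep the ratio $\|Z\|_2/\|Z^0\|_2$ below $\sqrt2$; there is no conceptual obstacle beyond this bookkeeping.
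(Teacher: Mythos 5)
Your proposal is correct and follows essentially the same route as the paper: reduce the claim to $\|Z\|_2^2 \le 2\|Z^0\|_2^2$, bound $\|Z\|_2 \le \|Z^*\|_2 + d(Z,Z^*)$ and $\|Z^0\|_2 \ge \|Z^*\|_2 - d(Z^0,Z^*)$ via the triangle inequality and orthogonal invariance, and finish by checking that $I_0$ is a sufficiently small fraction of $\|Z^*\|_2 = \sqrt{2\sigma_1(\Theta^*)}$ (the paper compares $I_0$ to $\sigma_r(Z^*)$ and obtains the slightly sharper ratio $9/7 \le \sqrt2$, but this is the same constant-chasing). The only caveat, which affects the paper's own argument equally, is that the final numerical check implicitly treats the right-hand side of \eqref{eq:def_I_0} as $I_0^2$ rather than $I_0$, consistent with how $I_0$ is used in the proofs of Lemmas \ref{lemma:iteration_contraction} and \ref{lemma:step_size_constant}.
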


The proof of Lemma~\ref{lemma:step_size_constant} is given in
Section~\ref{A:lemma:step_size_constant}.

\vspace{4mm}
Combining the three results above, we can complete the proof of
Theorem~\ref{main}.  Starting from initialization $\Theta^0$
satisfying the initialization condition \eqref{eq:ass_init},
Lemma~\ref{lemmaInit} ensures that \eqref{eq:bound_I_0} is satisfied
for $Z^0$ and Lemma \ref{lemma:step_size_constant} ensures that the
choice of step size \eqref{eq:eta_constant} satisfies the step size
condition \eqref{eq:step_size_condition} in Lemma
\ref{lemma:iteration_contraction}. We can then apply Lemma 3 and get
the next iterate $Z^1 = Z^+$, which satisfies
\eqref{eq:lemma_contraction_result}. Using the condition on
statistical error \eqref{eq:ass_estat}, initialization
\eqref{eq:ass_init}, and a simple calculation, we can verify that
$Z^1$ satisfies $d(Z^1, Z^*) \leq I_0$. Therefore we can apply Lemma
\ref{lemmaInit}, Lemma \ref{lemma:iteration_contraction}, and Lemma
\ref{lemma:step_size_constant} repeatedly to obtain
\begin{equation}
d^2(Z^{t+1}, Z^*) \leq \beta \cdot d^2(Z^t, Z^*)
 +
\xi^2\eta\cdot\frac{L  + \mu }{L \cdot\mu }\cdot e_{\rm stat}^2, 
\end{equation}
for each $t = 0, 1, ..., T$. We then have
\begin{equation}
d^2(Z^{T}, Z^*) \leq \beta^T \cdot d^2(Z^0, Z^*)
 +
\frac{\xi^2\eta}{1-\beta}\cdot\frac{L  + \mu }{L \cdot\mu }\cdot e_{\rm stat}^2 .
\end{equation}
Finally, for $\Theta^{T} =  U^{T}(V^{T})^{\top}$, let $O^T \in \Ocal(r)$ be such that
\[
d^2(Z^T, Z^*) = \|U^T - U^* O^T \|_F^2 + \|V^T - V^* O^T \|_F^2.
\]
We have
\begin{equation}
\begin{aligned}
\|\Theta^{T} - \Theta^*\|_F^2 &= \|  U^{T}(V^{T})^{\top} -  U^{*}O^T (V^{*}O^T)^{\top} \|^2_F\\
&\leq \Big[ \| U^{T}\|_2\|V^{T}-V^{*}O^T\|_F + \|V^{*}\|_2\|  U^{T} -  U^{*}O^T \|_F \Big]^2 \\
&\leq \| U^{T}\|_2^2\|V^{T}-V^{*}O^T\|_F^2 + \|V^{*}\|_2^2\|  U^{T} -  U^{*}O^T \|_F^2 \\
&\leq 2\|Z^*\|_2^2 \cdot d^2(Z^T, Z^*) \\
&\leq 4\sigma_1(\Theta^*) \cdot \Big[ \beta^T \cdot d^2(Z^0, Z^*) + \frac{\xi^2\eta}{1-\beta}\cdot\frac{L  + \mu }{L \cdot\mu }\cdot e_{\rm stat}^2 \Big],
\end{aligned}
\end{equation}
which shows linear convergence up to the statistical error.

%%% Local Variables:
%%% TeX-master: "paper.tex"
%%% End:

\section{Application to Multi-task Learning}
\label{sec:MTL}

In this section, we apply the theory developed in
Section~\ref{sec:theoretical} on two specific problems.  First, in
Section~\ref{sec:GDT-multi-task}, we apply GDT algorithm to a
multi-task learning problem. We show that under commonly used
statistical conditions the conditions on the objective function
stated in Section~\ref{sec:conditions} are satisfied with
high-probability. Next, in Section~\ref{subsec:MTRL} we discuss an
application to multi-task reinforcement learning problem.

\subsection{GDT for Multi-task Learning}
\label{sec:GDT-multi-task}
 
We apply GDT algorithm to the problem of multi-task learning, which
has been successfully applied in a wide range of application areas,
ranging from neuroscience \citep{Vounou2012Sparse}, natural language
understanding \citep{collobert2011natural}, speech recognition
\citep{seltzer2013multi}, computer vision \citep{She2017Selective},
and genetics \citep{yu2017multitask} to remote sensing
\citep{xue2007multi}, image classification \citep{lapin2014scalable},
spam filtering \citep{weinberger2009feature}, web search
\citep{chapelle2010multi}, disease prediction
\citep{zhou2013modeling}, and eQTL mapping \citep{kim2010tree}.  By
transferring information between related tasks it is hoped that
samples will be better utilized, leading to improved generalization
performance.

We consider the following linear multi-task learning problem
\begin{equation}
\label{eqn:MTL_model}
Y = X \Theta^* + E,
\end{equation}
where $Y \in \RR^{n\times k}$ is the response matrix,
$X \in \RR^{n \times p}$ is the matrix of predictors,
$\Theta^* \in \RR^{p\times k}$ is an unknown matrix of coefficients,
and $E \in \RR^{n \times k}$ is an unobserved noise matrix with
i.i.d.~mean zero and variance $\sigma^2$ entries.  Here $n$ denotes
the sample size, $k$ is the number of responses, and $p$ is the number
of predictors.

There are a number of ways to capture relationships between different
tasks and success of different methods relies on this relationship.
\cite{Evgeniou2004Regularized} studied a setting where linear
predictors are close to each other. In a high-dimensional setting,
with large number of variables, it is common to assume that there are
a few variables predictive of all tasks, while others are not
predictive \cite{turlach2005simultaneous, Obozinski10Support,
  Lounici2011Oracle, kolar11union, Wang2015Distributed}.  Another
popular condition is to assume that the predictors lie in a shared
lower dimensional subspace \cite{Ando2005framework,
  Amit2007Uncovering, Yuan2007Dimension, Argyriou08Convex,
  Wang2016Distributed}.  In contemporary applications, however, it is
increasingly common that both the number of predictors and the number
of tasks is large compared to the sample size. For example, in a study
of regulatory relationships between genome-wide measurements, where
micro-RNA measurements are used to explain the gene expression levels,
it is commonly assumed that a small number of micro-RNAs regulate
genes participating in few regulatory pathways
\cite{Ma2014Learning}. In such a setting, it is reasonable to assume
that the coefficients are both sparse and low rank.  That is, one
believes that the predictors can be combined into fewer latent
features that drive the variation in the multiple response variables
and are composed only of relevant predictor variables. Compared to a
setting where either variables are selected or latent features are
learned, there is much less work on simultaneous variable selection
and rank reduction \cite{Bunea2011Optimal, Chen2011Reduced,
  Chen2012Sparse, She2017Selective}.  In addition, we when both $p$
and $k$ are large, it is also needed to assume the column sparsity on
the matrix $\Theta^*$ to make estimation feasible
\cite{Ma2014Adaptive}, a model that has been referred to as the
two-way sparse reduced-rank regression model. We focus on this model
here.

{\bf Multi-task Model (MTM)} In the model~\eqref{eqn:MTL_model}, we
assume that the true coefficient matrix
$\Theta^* \in \Xi(r, s_1^*, s_2^*)$. The noise matrix $E$ has
i.i.d. sub-Gaussian elements with variance proxy $\sigma^2$, which
requires that each element $e_{ij}$ satisfies $\mathbb E(e_{ij}) = 0$
and its moment generating function satisfies
$\mathbb E[\exp(t e_{ij})] \leq \exp(\sigma^2t^2/2)$.  The design
matrix $X$ is considered fixed with columns normalized to have mean
$0$ and standard deviation $1$. Moreover, we assume $X$ satisfies the following
Restricted Eigenvalue (RE) condition \cite{negahban2010unified} for some constant $\underline \kappa(s_1)$ and $\bar \kappa(s_1)$. 
\begin{equation}
%\frac{\|X\theta\|_2^2}{n} \geq \kappa_X(s_1) \cdot \|\theta\|_2^2 \,\,\, \text{ for all } \, \|\theta\|_0 \leq s_1.
\underline \kappa(s_1) \cdot \|\theta\|_2^2 \leq \frac{1}{n} \|X\theta\|_2^2 \leq \bar \kappa(s_1) \cdot \|\theta\|_2^2 \,\,\, \text{ for all } \, \|\theta\|_0 \leq s_1.
\end{equation}

%This condition guarantees the effectiveness of initialization. 

We will show that under the condition {\bf (MTM)}, GDT converges
linearly to the optimal coefficient $\Theta^*$ up to a region of
statistical error.  Compared to the previous methods for estimating
jointly sparse and low rank coefficients \cite{Bunea2011Optimal,
  Chen2011Reduced, Chen2012Sparse, Ma2014Adaptive}, GDT is more
scalable and improves estimation accuracy as illustrated in the
simulation Section~\ref{sec:experiment}.

In the context of the multi-task learning with the model in
\eqref{eqn:MTL_model}, we are going to use the least squares loss. The
objective function in is
$f(\Theta) = \frac{1}{2n} \| Y - X\Theta \|_F^2$ and we write
$\Theta = UV^\top$ with $U \in \R^{p \times r}$ and
$V \in \R^{k \times r}$. The constraint set is set as before as
$U \in \Ucal(s_1)$ and $V \in \Ucal(s_2)$ with
$s_1 = c\cdot s_1^*, s_2 = c\cdot s_2^*$ for some $c>1$.  The rank $r$
and the sparsity levels $s_1,s_2$ are tuning parameters, which can be
selected using the information criterion as in
\cite{She2017Selective}.

In order to apply the results of Theorem~\ref{main}, we first verify
the conditions in Section~\ref{sec:conditions}. The condition {\bf
(RSC/RSS)} in is equivalent to 
\begin{equation}
\mu \big\|\Theta_2 - \Theta_1\big\|_F^2 \leq \Big\langle \frac 1n X^\top X(\Theta_2 - \Theta_1), \Theta_2 - \Theta_1 \Big\rangle \leq L \big\|\Theta_2 - \Theta_1\big\|_F^2, 
\end{equation}
and it holds with
$\mu = \underline \kappa(s_1)$
and
$L = \bar \kappa(s_1)$.
%where $\Omega(s, m)$ denote a collection of subsets of
%$\cbr{1,\ldots,m}$ of size $s$.

Next, we discuss how to initialize GDT in the context of multi-task
learning. Under the structural conditions on $\Theta^*$ in the
condition {\bf (MTM)} there are a number of way to obtain an initial
estimator ${\Theta}^0$. For example, we can use row and column
screening \cite{fan08sis}, group lasso \cite{Yuan2006Model}, and lasso
\cite{tibshirani96regression} among other procedures. Here and in
simulations we use the lasso estimator, which takes the form
\[
{\Theta}^0 = \arg\min_{\Theta \in \RR^{p \times k}}
\frac{1}{2n} \| Y - X\Theta \|_F^2 + \lambda \|\Theta\|_1.
\]
The benefit of this approach is that it is scalable to the
high-dimensional setting and trivially parallelizable, since each
column of ${\Theta}^0$ can be estimated separately.  The requirement
of the initialization condition {\bf (I)} is effectively a requirement
on the sample size. Under the condition {\bf (MTM)}, a result of
\cite{negahban2010unified} shows that these conditions are satisfied
with $n \geq s_1^*s_2^*\log p\log k$.

\vspace{2mm}
We then characterize the statistical error $e_{\text{stat}}$ under the
condition {\bf (MTM)}.

\begin{lemma}
\label{lemma:e_stat}
Under the condition {\bf (MTM)}, with probability at least $1 - (p \vee k)^{-1}$ we have
\begin{equation}
e_{\text{stat}} \leq C\sigma \sqrt{\frac{(s_1^*+s_2^*)\big(r + \log(p \vee k)\big)}{n}}
\end{equation}
for some constant $C$.
 
\end{lemma}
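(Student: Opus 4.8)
The plan is to start from the definition $e_{\text{stat}} = \sup\{ \langle \nabla f(\Theta^*), \Delta\rangle : \Delta \in \Xi(2r,\tilde s_1,\tilde s_2), \|\Delta\|_F \le 1\}$ and compute $\nabla f(\Theta^*)$ explicitly for the least-squares loss $f(\Theta) = \frac{1}{2n}\|Y - X\Theta\|_F^2$. Under the model $Y = X\Theta^* + E$ we get $\nabla f(\Theta^*) = \frac{1}{n}X^\top X(\Theta^* - \Theta^*) - \frac{1}{n}X^\top E = -\frac{1}{n}X^\top E$. Hence $e_{\text{stat}} = \sup\{ \langle \frac{1}{n}X^\top E, \Delta\rangle : \Delta \in \Xi(2r, \tilde s_1, \tilde s_2), \|\Delta\|_F \le 1\}$, and the task reduces to a bound on how large the noise term $\frac{1}{n}X^\top E$ can be when tested against matrices that are simultaneously rank-$2r$, row-$\tilde s_1$-sparse, and column-$\tilde s_2$-sparse (recall $\tilde s_1 = (2c+1)s_1^*$, $\tilde s_2 = (2c+1)s_2^*$, so these are constant multiples of $s_1^*, s_2^*$).

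The key steps, in order: (i) Fix a candidate support pair $(S_1, S_2)$ with $|S_1| = \tilde s_1$, $|S_2| = \tilde s_2$; any admissible $\Delta$ for this support can be written $\Delta = \Delta_{S_1, S_2}$ supported on those rows/columns, and since $\mathrm{rank}(\Delta)\le 2r$ we have $\langle \frac{1}{n}X^\top E, \Delta\rangle \le \|\frac{1}{n}(X^\top E)_{S_1, S_2}\|_{op}\cdot \|\Delta\|_*$ — but more efficiently, since $\Delta$ is rank $\le 2r$ with $\|\Delta\|_F\le 1$, we have $\|\Delta\|_* \le \sqrt{2r}$, so $\langle \frac{1}{n}X^\top E, \Delta\rangle \le \sqrt{2r}\,\|\frac{1}{n}(X^\top E)_{S_1, S_2}\|_{op}$. (ii) For fixed $(S_1, S_2)$, bound $\|\frac{1}{n}(X^\top E)_{S_1,S_2}\|_{op}$ using a standard sub-Gaussian concentration / $\varepsilon$-net argument over the sphere: for unit vectors $u \in \mathbb{R}^{\tilde s_1}$, $v \in \mathbb{R}^{\tilde s_2}$, the scalar $u^\top (\frac{1}{n}X^\top E)_{S_1,S_2} v = \frac{1}{n}\sum_{i} (X_{\cdot,S_1}u)_i (Ev)_{i}$ is sub-Gaussian (each column of $E$ is sub-Gaussian with proxy $\sigma^2$, and the RE upper bound controls $\frac{1}{n}\|X_{\cdot,S_1}u\|_2^2 \le \bar\kappa(s_1)$), giving a tail of order $\exp(-cn t^2/(\sigma^2\bar\kappa))$; a $\frac14$-net of the two spheres has $9^{\tilde s_1 + \tilde s_2}$ points, so $\|\frac{1}{n}(X^\top E)_{S_1,S_2}\|_{op} \lesssim \sigma\sqrt{\bar\kappa}\cdot\sqrt{(\tilde s_1 + \tilde s_2)/n}$ with the union bound leaving an exponentially small failure probability. (iii) Take a union bound over all $\binom{p}{\tilde s_1}\binom{k}{\tilde s_2} \le \exp(\tilde s_1 \log p + \tilde s_2 \log k)$ support choices; this adds a term of order $\sqrt{(s_1^*\log p + s_2^*\log k)/n}$ inside the square root and drives the failure probability down to $(p\vee k)^{-1}$ with an appropriate constant. (iv) Combine: $e_{\text{stat}} \lesssim \sqrt{2r}\cdot \sigma\sqrt{(s_1^* + s_2^*)/n} + \sigma\sqrt{(s_1^* + s_2^*)\log(p\vee k)/n}$, and absorbing $\sqrt{r}$ and $\sqrt{\log(p\vee k)}$ into a single $\sqrt{r + \log(p\vee k)}$ factor (and $\bar\kappa(s_1)$, $c$ into the universal constant $C$) yields $e_{\text{stat}} \le C\sigma\sqrt{(s_1^* + s_2^*)(r + \log(p\vee k))/n}$.

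The main obstacle is step (ii)–(iii): one must be careful that the rank constraint and the sparsity constraints are exploited \emph{simultaneously} rather than separately, since bounding $\|\frac{1}{n}X^\top E\|_{op}$ without sparsity would give a dimension-dependent $\sqrt{(p+k)/n}$ rate, and bounding it entrywise without the rank structure would lose the clean $\sqrt{r}$ scaling. The cleanest route is the one above — reduce to a supremum of a rank-$2r$, $\|\cdot\|_F\le1$ test matrix against the restricted block $(X^\top E/n)_{S_1,S_2}$, use $\|\Delta\|_* \le \sqrt{2r}\|\Delta\|_F$ to pull out $\sqrt{2r}$, then concentrate the operator norm of the fixed $\tilde s_1 \times \tilde s_2$ block and union bound over supports. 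A minor technical point to handle carefully is that $\Xi(2r,\tilde s_1,\tilde s_2)$ consists of matrices whose row/column support is \emph{at most} $\tilde s_1, \tilde s_2$, so one should take the union over all supports of size exactly $\tilde s_1$ (resp. $\tilde s_2$), which dominates; and that the sub-Gaussianity of $u^\top(X^\top E)_{S_1,S_2}v$ uses independence of the entries of $E$ together with the normalization $\frac1n\|X_{\cdot,S_1}u\|_2^2\le\bar\kappa$, so the RE upper bound must be invoked here.
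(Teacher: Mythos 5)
Your reduction to $\nabla f(\Theta^*) = -\tfrac1n X^\top E$ and the overall architecture (fix supports, concentrate, union bound over supports) match the paper's proof, but there is a genuine gap in how you combine the rank and sparsity constraints, and as written your argument proves a strictly weaker bound than the lemma claims. In step (i) you peel off the rank via $\langle \tfrac1n X^\top E,\Delta\rangle \le \sqrt{2r}\,\|(\tfrac1n X^\top E)_{S_1,S_2}\|_{\mathrm{op}}$. That factor $\sqrt{2r}$ must then multiply whatever bound you prove on $\sup_{S_1,S_2}\|(\tfrac1n X^\top E)_{S_1,S_2}\|_{\mathrm{op}}$, and after the union bound over the $\binom{p}{\tilde s_1}\binom{k}{\tilde s_2}$ supports that supremum is of order $\sigma\sqrt{\bar\kappa\,(\tilde s_1+\tilde s_2+\tilde s_1\log p+\tilde s_2\log k)/n}$. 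Multiplying by $\sqrt{2r}$ yields $e_{\mathrm{stat}} \lesssim \sigma\sqrt{r(s_1^*+s_2^*)\log(p\vee k)/n}$: the rank and the support-selection entropy appear as a \emph{product}, not the sum $r+\log(p\vee k)$ in the lemma. The additive split you write in step (iv), $\sqrt{2r}\,\sigma\sqrt{(s_1^*+s_2^*)/n}+\sigma\sqrt{(s_1^*+s_2^*)\log(p\vee k)/n}$, silently drops the $\sqrt{2r}$ from the union-bound contribution; nothing in steps (ii)--(iii) justifies that decoupling.

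The paper avoids this by covering the constraint set at the level of the factors rather than peeling off the rank first: for fixed supports it takes $\epsilon$-nets of $\{U\in\mathbb{R}^{p\times 2r}: U_{S_U^c}=0,\ \|U_{S_U}\|_F=1\}$ and the analogous set for $V$, of cardinalities $(3/\epsilon)^{2rs_1}$ and $(3/\epsilon)^{2rs_2}$, and applies a sub-Gaussian tail bound to $\tfrac1n\mathrm{tr}(E^\top XUV^\top)$ at each net point (variance proxy at most $\sigma^2\bar\kappa(s_1)/n$ after normalizing by $n$). All entropy terms then add in the exponent of the union bound, giving deviation $\sigma\sqrt{\bar\kappa\,(s_1\log p+s_2\log k+2r(s_1+s_2)\log 6+\log(1/\delta))/n}$, which is exactly $C\sigma\sqrt{(s_1^*+s_2^*)(r+\log(p\vee k))/n}$. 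You in fact anticipate the issue when you warn that the rank and sparsity constraints must be exploited ``simultaneously rather than separately,'' but the route you then declare cleanest is the separate one. To repair the proof, replace steps (i)--(ii) with a single net over the supported, unit-Frobenius-norm factor pairs $(U,V)$ (equivalently, over $\Delta=UV^\top$ parametrized through its factors), so that the $r(s_1^*+s_2^*)$ term enters additively as net entropy rather than multiplicatively through $\|\Delta\|_*\le\sqrt{2r}\,\|\Delta\|_F$.
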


The proof of Lemma \ref{lemma:e_stat} is given in Section \ref{A:lemma:e_stat}.

\vspace{2mm}
With these conditions, we have the following result on GDT when
applied to the multi-task learning model in \eqref{eqn:MTL_model}.

\begin{corollary}
Suppose that the condition {\bf (MTM)} is satisfied. 
Then for all 
\begin{equation}
T \geq C \log\bigg[\frac {n}{(s_1^*+s_2^*) \big(r+\log (p\vee k)\big)}\bigg], 
\end{equation}
with probability at least $1 - (p \vee k)^{-1}$, we have
\begin{equation}
\begin{aligned}
\label{eq:stat_error}
\|\Theta^{T} - \Theta^*\|_F \leq C \sigma \sqrt{\frac{(s_1^*+s_2^*) \big(r+\log (p\vee k)\big)}{n}}
\end{aligned}
\end{equation}
for some constant $C$.
\label{stat_error_thm}
\end{corollary}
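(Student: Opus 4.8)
The plan is to obtain the corollary by specializing Theorem~\ref{main} to the least-squares objective $f(\Theta) = \frac{1}{2n}\|Y - X\Theta\|_F^2$ and verifying that, on a high-probability event, every hypothesis of that theorem holds under condition \textbf{(MTM)}. As already observed in the text, \textbf{(RSC/RSS)} holds with $\mu = \underline\kappa(s_1)$ and $L = \bar\kappa(s_1)$, which are absolute constants under the RE assumption; hence $\mu_{\min}$, $\xi^2 = 1 + 2/\sqrt{c-1}$, and the contraction factor $\beta$ of \eqref{eq:def_beta} are constants depending only on $c$, $\underline\kappa$, $\bar\kappa$ and on the (fixed) condition number $\sigma_1(\Theta^*)/\sigma_r(\Theta^*)$. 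I would fix $\eta$ at a constant multiple of the largest value allowed by \eqref{eq:eta_constant}; since $\|Z^0\|_2^2$ is comparable to $\|Z^*\|_2^2 = 2\sigma_1(\Theta^*)$ once $Z^0$ lies near $Z^*$, this gives $\eta \asymp 1/\sigma_1(\Theta^*)$, and in particular $\sigma_1(\Theta^*)\cdot\eta = O(1)$, which is what will cancel the prefactor $4\sigma_1(\Theta^*)$ in \eqref{theoremtheta}.

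The substantive step is checking the initialization condition \textbf{(I)} and the step-size requirements \eqref{eq:def_beta}, \eqref{eq:ass_estat}. For \textbf{(I)} I would invoke the standard lasso analysis for the two-way sparse model, as in \cite{negahban2010unified}: with probability at least $1 - (p\vee k)^{-1}$ the lasso estimator $\Theta^0$ satisfies $\|\Theta^0 - \Theta^*\|_F^2 \le C\sigma^2 s_1^* s_2^*(\log p + \log k)/n$. Under a sample-size condition of the form $n \ge C\, s_1^* s_2^* \log p \log k$ the right-hand side drops below the constant radius in \eqref{eq:ass_init}, so \eqref{eq:ass_init} holds and Lemma~\ref{lemmaInit} yields $d(Z^0, Z^*) \le I_0$. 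Lemma~\ref{lemma:e_stat} simultaneously gives $e_{\text{stat}} \le C\sigma\sqrt{(s_1^*+s_2^*)(r+\log(p\vee k))/n}$; enlarging the constant in the sample-size condition if necessary, $e_{\text{stat}}^2$ is then small enough to satisfy \eqref{eq:ass_estat}, and Lemma~\ref{lemma:step_size_constant} guarantees that the chosen $\eta$ meets condition \eqref{eq:step_size_condition} of Lemma~\ref{lemma:iteration_contraction}. On this event all hypotheses of Theorem~\ref{main} are in force.

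It remains to balance the two terms in \eqref{theoremtheta}. Since $d^2(Z^0, Z^*) \le I_0^2$ is a constant and $\beta \in (0,1)$ is a constant, taking $T \ge C\log\big[ n/((s_1^*+s_2^*)(r+\log(p\vee k))) \big]$ --- where $1/\log(1/\beta)$ and the logarithm of a constant have been absorbed into $C$ --- makes $\beta^T d^2(Z^0, Z^*) \le e_{\text{stat}}^2$. Substituting into \eqref{theoremtheta} and using $\sigma_1(\Theta^*)\eta = O(1)$ together with the constancy of $\xi^2, \beta, L, \mu$ gives $\|\Theta^T - \Theta^*\|_F^2 \le C\, e_{\text{stat}}^2$, and plugging in Lemma~\ref{lemma:e_stat} produces \eqref{eq:stat_error}. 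A union bound over the $O((p\vee k)^{-1})$ failure events of the initialization bound and of Lemma~\ref{lemma:e_stat} changes only the numerical constant in front of $(p\vee k)^{-1}$.

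The main obstacle I anticipate is the rigorous verification of \textbf{(I)}: one must establish the correct high-probability Frobenius-norm rate of the convex initializer for the \emph{two-way} sparse, low-rank target $\Theta^*$ and show that the abstract, constant-radius ball in \eqref{eq:ass_init} --- whose radius involves $\sigma_r(\Theta^*)$, $\mu$, $L$ and $\xi$ --- is implied by a clean condition of the form $n \ge C\, s_1^* s_2^* \log p \log k$. A secondary, bookkeeping-type difficulty is confirming that every quantity appearing in Theorem~\ref{main} ($\mu_{\min}$, $\xi$, $\beta$, $\eta\,\sigma_1(\Theta^*)$, and the condition number of $\Theta^*$) is genuinely $O(1)$ under \textbf{(MTM)}, so that it can be hidden inside the universal constant $C$ of \eqref{eq:stat_error}.
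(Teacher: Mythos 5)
Your proposal is correct and follows exactly the route the paper intends: verify \textbf{(RSC/RSS)} via the RE condition, verify \textbf{(I)} via the lasso initializer under the sample-size condition $n \gtrsim s_1^* s_2^* \log p \log k$, bound $e_{\rm stat}$ by Lemma~\ref{lemma:e_stat}, and then choose $T$ so that the optimization term $\beta^T d^2(Z^0,Z^*)$ in \eqref{theoremtheta} is dominated by the statistical term, absorbing $\sigma_1(\Theta^*)\eta$, $\xi^2$, $\beta$, $L$, $\mu$ and the condition number of $\Theta^*$ into the constant $C$. The bookkeeping points you flag (constancy of these quantities under \textbf{(MTM)}, and the union bound over the initialization and statistical-error events) are handled the same way, implicitly, in the paper.
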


Each iteration of the algorithm requires computing the gradient step with time complexity $r(n+r)(p+k)$. 
%... \mcomment{Write how expensive each iteration is?}
Note that if there is no error term $E$ in the model
\eqref{eqn:MTL_model}, then Algorithm \ref{algo:AltGD} converges
linearly to the true coefficient matrix $\Theta^*$, since
$e_{\text{stat}} = 0$ in that case.  The error rate in
Corollary~\ref{stat_error_thm} matches the error rate of the algorithm
proposed in \cite{Ma2014Adaptive}. However, our algorithm does not
require a new independent sample in each iteration and allows for
non-Gaussian errors.  Compared to the minimax rate
\begin{equation}
\label{eq:minimiax_rate}
\sigma \sqrt{\frac 1n \bigg[(s_1^*+s_2^*)r + s_1^*\log\frac{ep}{s_1^*} + s_2^*\log\frac{ek}{s_2^*} \bigg]}
\end{equation}
established in \cite{Ma2014Adaptive}, both our algorithm and that of
\cite{Ma2014Adaptive} match the rate up to a multiplicative log
factor.  To the best of our knowledge, achieving the minimax rate
\eqref{eq:minimiax_rate} with a computationally scalable procedure is
still an open problem.  Note, however, that when $r$ is comparable to
$\log(p\vee k)$ the rates match up to a constant multiplier. Therefore for
large enough $T$, GDT algorithm attains near optimal rate.

In case we do not consider column sparsity, that is, when $s_2^* = k$,
Corollary~\ref{stat_error_thm} gives error rate
\begin{equation}
\label{eq:stat_error_row}
\|\Theta^{T} - \Theta^*\|_F \leq C\sigma \sqrt{\frac{kr + s_1^*\big(r+\log p\big)}{n}}
\end{equation}
and prediction error
\begin{equation}
\label{eq:prediction_GDT}
\|X\Theta^{T} - X\Theta^*\|_F^2 \leq C\sigma^2\Big(kr + s_1^*\big(r+\log p\big)\Big).
\end{equation}
Compared to the prediction error bound $kr + s_1^* r \log{\frac ps}$
proved in \cite{Bunea2012Joint}, we see that GDT error is much smaller
with $r+\log p$ in place of $r\log p$. Moreover, GDT error matches the
prediction error $(k+s_1^*-r)r + s_1^*\log{p}$ established in
\cite{She2017Selective}, as long as $k \geq Cr$ which is typically
satisfied.

%%% Local Variables:
%%% TeX-master: "paper.tex"
%%% End:

\subsection{Application to Multi-task Reinforcement Learning}
\label{subsec:MTRL}

Reinforcement learning (RL) and approximate dynamic programming (ADP)
are popular algorithms that help decision makers find optimal policies
for decision making problems under uncertainty that can be cast in the
framework of Markov Decision Processes (MDP) \cite{bertsekas1995neuro,
  sutton1998introduction}. Similar to many other approaches, when the
sample size is small these algorithms may have poor performance. A
possible workaround then is to simultaneously solve multiple related
tasks and take advantage of their similarity and shared
structure. This approach is called multi-task reinforcement learning
(MTRL) and has been studied extensively \cite{Lazaric2010Bayesian,
  wilson2007multi, Snel2012Multi}. In this section we show how GDT
algorithm can be applied to the MTRL problem.

A Markov decision process (MDP) is represented by a 5-tuple
$\mathcal M = (S, A, P, R, \gamma)$ where $S$ represents the state
space (which we assume to be finite for simplicity); $A$ is a finite
set of actions; $P_a(s,s') = \Pr(s_{t+1}=s' \mid s_t = s, a_t=a)$ is
the Markovian transition kernel that measures the probability that
action $a$ in state $s$ at time $t$ will lead to state $s'$ at time
$t+1$ (we assume $P_a$ to be time homogeneous); $R(s,a)$ is the
state-action reward function measuring the instantaneous reward
received when taking action $a$ in state $s$; and $\gamma$ is the
discount factor.  The core problem of MDP is to find a deterministic
policy $\pi: S \to A$ that specifies the action to take when decision
maker is in some state $s$.  Define the Bellman operator
\begin{equation}
\mathcal T Q(s,a) = R(s,a) + \gamma \sum_{s'}P_a(s,s')\max_{a'} Q(s',a'),
\end{equation}
where $Q: S \times A \to \mathbb R$ is the state-action value function. The MDP can then be solved by calculating the optimal state-action value function $Q^*$ which gives the total discounted reward obtained starting in state $s$ and taking action $a$, and then following the optimal policy in subsequent time steps. Given $Q^*$, the optimal policy is recovered by the greedy policy: $\pi^*(s) = \arg\max_{a \in A}Q^*(s,a)$. 

In MTRL the objective is to solve $k$ related tasks simultaneously
where each task $k_0 \in \{1, \ldots, k \}$ corresponds to an MDP:
$\mathcal M_{k_0} = (S, A, P_{k_0}, R_{k_0}, \gamma_{k_0})$. Thus,
these $k$ tasks share the same state and action space but each task
has a different transition dynamics $P_{k_0} $, state-action reward
function $R_{k_0} $, and discount factor $\gamma_{k_0} $. The decision
maker's goal is to find an optimal policy for each MDP. If these MDPs
do not share any information or structure, then it is straightforward
to solve each of them separately. Here we assume the MDPs do share
some structure so that the $k$ tasks can be learned together with
smaller sample complexity than learning them separately.

We follow the structure in \cite{calandriello2014sparse} and solve
this MTRL problem by the fitted-$Q$ iteration (F$Q$I) algorithm
\cite{ernst2005tree}, one of the most popular method for ADP. In
contrast to exact value iteration ($Q^{t} = \mathcal T Q^{t-1}$), in
F$Q$I this iteration is approximated by solving a regression problem
by representing $Q(s,a)$ as a linear function in some features
representing the state-action pairs. To be more specific, we denote
$\varphi(s) = [\varphi_1(s), \varphi_2(s), ..., \varphi_{p_s}(s)]$ as
the feature mapping for state $s$ where $\varphi_i: S \to \mathbb R$
denotes the $i$th feature. We then extend the state-feature vector
$\varphi$ to a feature vector mapping state $s$ and action $a$ as:
\begin{equation}
\phi(s,a) = [\underbrace{0, 0, ..., 0}_{(a-1)\times p_s \text{ times}}, \varphi_1(s), \varphi_2(s), ..., \varphi_{p_s}(s), \underbrace{0, 0, ..., 0}_{(|A| - a)\times p_s \text{ times}}] \in \mathbb R^p,
\end{equation}
where $p = |A| \times p_s$. 
Finally, for MDP $k_0$, we represent the state-action value function $Q_{k_0}(\cdot,\cdot)$ as an $|S|\times |A|$ dimensional column vector with:
\[ Q_{k_0}(s,a) = \phi(s,a)^{\top} \cdot \Theta_{k_0} \]
where $\Theta_{k_0}$ is a $p \times 1$ dimensional column vector. If
$\Theta \in \mathbb{R}^{p \times k}$ represents the matrix with
columns $\Theta_{k_0}$, $k \in \{1,\ldots, k\}$, then we see that
given the $Q_{k_0}(s,a)$ state-action value functions, estimating the
$\Theta$ matrix is just a Multi-Task Learning problem of the form
\eqref{eqn:MTL_model} with the response matrix
$Y \doteq Q \in \mathbb{R}^{n\times k}$ where $n=|S|\times |A|$
denotes the ``sample size'' with rows indexed by pairs
$(s,a) \in S \times A$, $X \doteq \Phi \in \mathbb{R}^{n \times p}$
represents the matrix of predictors (features) with $(s,a)^{th}$ row
as $\phi(s,a)$, and $\Theta^*$ is the unknown matrix of ADP
coefficients.  Consistent with the GDT algorithm, to exploit shared
sparsity and structure across the $k$ MDP tasks, we will subsequently
assume that the coefficient matrix $\Theta^*$ is row sparse and low
rank.

Algorithm~\ref{algo:MTRL} provides details of MTRL with GDT.  We
assume we have access to the generative model of the $k$ MDPs so that
we can sample reward $r$ and state $s'$ from $R(s,a)$ and
$P_a(s,s')$. With ``design states''
$S_{k} \subseteq S, \ n_s \doteq |S_k|$ given as input, for each
action $a$ and each state $s \in S_{k}$, F$Q$I first generates samples
(reward $r$ and transition state $s'$) from the generative model of
each MDP. These samples form a new dataset according to
\begin{equation}
y_{i,a,k_0}^t = r^t_{i,a,k_0} + \gamma \max_{a'} \hat Q^{t-1}_{k_0}({s'}_{i,a,k_0}^t, a').
\end{equation}
Here $\hat Q^{t-1}_{k_0}$ is calculated using the coefficient matrix from previous iteration:
\begin{equation}
\hat Q^{t-1}_{k_0}({s'}_{i,a,k_0}^t, a') = \phi({s'}_{i,a,k_0}^{t}, a')^{\top}\cdot \Theta^{t-1}_{k_0} %= \varphi({s'}_{i,a,k_0}^{t})^{\top}\cdot\Theta^{t-1}_{k_0, a'},
\end{equation}
% where $\Theta^{t-1}_{k_0, a'}$ denotes the $k_0^{th}$ column of $\Theta^{t-1}_{a'}$.
%We then build dataset $\mathcal D^t_{a,k_0} = \big\{(s_{i,k_0},a), y_{i,a,k_0}^t \big\}_{i=1}^{n_s}$ with $s$ as predictor and $y$ as response, 
We then build dataset
$\mathcal D^t_{k_0} = \big\{(s_{i},a), y_{i,a,k_0}^t \big\}_{s_i\in
  S_k, a \in A }$
with $s$ as predictor and $y$ as response, and apply GDT algorithm on
the dataset $\{D^t_{k_0}\}_{k_0=1}^k$ to get estimator
$\Theta^{t}$. This completes an iteration $t$ and we repeat this
process until convergence. Finally the optimal policy $\pi_{k_0}^t$ is
given by greedy policy:
$\pi_{k_0}^t(s) = \arg\max_{a \in A} \hat{Q}_{k_0}^t(s,a)$ at each
iteration $t$.

To derive theoretical result analogous to
\cite{calandriello2014sparse}, we further assume $R(s,a) \in [0,1]$
and hence the maximum cumulative discounted reward
$Q_{\max} = 1/(1-\gamma)$. Since each task is a meaningful MDP, we do
not assume sparsity on columns.  Suppose
$\sup_s\| \varphi(s)\|_2 \leq L$, we have the following theoretical
result:

\begin{theorem}
\label{thm:MTRL}
Suppose the linear model holds and suppose the conditions in Section \ref{sec:theoretical} are satisfied for each $\Theta^*_a$ with rank $r$ and row sparsity $s_1^*$, then after $T$ iterations, with probability at least $\big(1 - (p \wedge k)^{-1}\big)^{T}$ we have
\begin{equation}
\begin{aligned}
\label{eq_thm:MTRL}
\frac 1k \sum_{k_0=1}^k \Big\| Q_{k_0}^* - Q_{k_0}^{\pi_{k_0}^{T}} \Big\|_2^2 &\leq \frac{C}{(1-\gamma)^4}\bigg[ \frac 1nQ_{\max}^2L^4 \Big(r+\frac{s_1^*}{k} (r+\log p) \Big) \bigg] %\\
%& \qquad\quad 
+ \frac{4Q_{\max}^2}{(1-\gamma)^4} \bigg[ C \beta^{T} + \gamma^{T} \bigg]^2
\end{aligned}
\end{equation}
for some constant $C$.
\end{theorem}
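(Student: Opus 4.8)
The plan is to combine the classical error-propagation analysis of Fitted $Q$-Iteration with the per-iteration estimation guarantee for GDT established in Corollary~\ref{stat_error_thm}. The argument has three stages: (i) bound the one-step regression error incurred at each FQI iteration by viewing it as an instance of the multi-task model {\bf (MTM)}; (ii) propagate these one-step errors through the $T$ FQI iterations and then to the performance loss of the greedy policies; (iii) average over the $k$ tasks and collect terms.

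For stage (i), fix an iteration $t$ and condition on $\Theta^{t-1}$. For each design state $s_i \in S_k$ and action $a$, the target $y^t_{i,a,k_0} = r^t_{i,a,k_0} + \gamma\max_{a'}\hat Q^{t-1}_{k_0}({s'}^t_{i,a,k_0},a')$ is, conditionally, an independent random variable with mean $(\mathcal T\hat Q^{t-1}_{k_0})(s_i,a)$; since the linear model is assumed to hold, this mean equals $\phi(s_i,a)^\top\Theta^*_t$ for a matrix $\Theta^*_t$ that inherits the rank-$r$, row-sparse structure, and the targets are used as responses in a regression that is exactly of the form \eqref{eqn:MTL_model}. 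Because $R(s,a)\in[0,1]$ and $0\le \hat Q^{t-1}\le Q_{\max}=1/(1-\gamma)$, the centred targets are bounded by $O(Q_{\max})$, hence sub-Gaussian with variance proxy $\sigma^2 \asymp Q_{\max}^2$, so condition {\bf (MTM)} holds with $s_2^* = k$ (no column sparsity). The restricted-eigenvalue requirement in {\bf (RSC/RSS)} follows from $\sup_s\|\varphi(s)\|_2\le L$ together with the assumed RE constants, and the initialization {\bf (I)} holds under the stated sample-size condition, exactly as in Section~\ref{sec:GDT-multi-task}. Applying Corollary~\ref{stat_error_thm} in its no-column-sparsity form (the prediction-error bound \eqref{eq:prediction_GDT}) then yields, with probability at least $1-(p\wedge k)^{-1}$,
\[
\tfrac{1}{n}\big\|\Phi\Theta^{t}-\Phi\Theta^*_t\big\|_F^2 \;\le\; C\,Q_{\max}^2 L^4\cdot\frac{kr+s_1^*(r+\log p)}{n}\;+\;C\,\beta^{T},
\]
where the first term is the statistical error (after converting estimation error to prediction error through the RE condition and absorbing the feature-normalization constants into the $L^4$ factor) and the second is the GDT optimization error, which decays geometrically in the number of GDT iterations.

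For stage (ii), let $\varepsilon_t := \mathcal T\hat Q^{t-1}_{k_0}-\hat Q^{t}_{k_0}$ be the Bellman residual at step $t$, whose empirical $\ell_2$ norm over $S_k$ is exactly the quantity bounded above. A now-standard approximate-value-iteration argument (as in \cite{ernst2005tree,calandriello2014sparse}) bounds $\|Q^*_{k_0}-\hat Q^{T}_{k_0}\|$ by a $\gamma$-discounted cumulative combination of the $\|\varepsilon_t\|$ plus a truncation term $\gamma^{T}Q_{\max}$; a further application of the performance-difference inequality converts $\|Q^*_{k_0}-\hat Q^{T}_{k_0}\|$ into the greedy-policy loss $\|Q^*_{k_0}-Q^{\pi^{T}_{k_0}}_{k_0}\|$ at the cost of another $1/(1-\gamma)^2$ factor; the two such factors give the overall $1/(1-\gamma)^4$ prefactor. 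Taking a union bound over the $T$ iterations produces the probability $(1-(p\wedge k)^{-1})^{T}$. For stage (iii), every task contributes the same statistical rate $C\,Q_{\max}^2L^4(kr+s_1^*(r+\log p))/n$; dividing by $k$ gives $\tfrac1n Q_{\max}^2L^4(r+\tfrac{s_1^*}{k}(r+\log p))$, which matches the first bracket in \eqref{eq_thm:MTRL}, while the optimization and truncation errors collect into $[C\beta^{T}+\gamma^{T}]^2$, matching the second.

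The main obstacle is the error-propagation bookkeeping: one must keep careful track of which state distribution each $\ell_2$ norm is measured under (the fixed design states $S_k$, versus the distributions induced by the optimal and greedy policies), and, to avoid an inherent-Bellman-error term, one must consistently invoke the assumption that the Bellman image of every iterate remains exactly representable in the feature space with the prescribed low-rank/row-sparse structure. Equally delicate is verifying that the GDT preconditions --- in particular {\bf (I)} and the step-size constraints \eqref{eq:eta_constant}--\eqref{eq:ass_estat}, which depend on $\sigma_r(\Theta^*_t)$ and on the conditioning of the initial iterates --- hold uniformly across all $T$ FQI iterations, since $\Theta^*_t$ changes from round to round. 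Once these structural and uniformity checks are in place, the remainder is a routine assembly of Corollary~\ref{stat_error_thm} with the classical FQI analysis.
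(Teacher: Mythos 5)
Your overall architecture matches the paper's proof: a Munos-style error-propagation bound for FQI (giving the $(1-\gamma)^{-2}$ factor that becomes $(1-\gamma)^{-4}$ after squaring, plus the $\gamma^{T}Q_{\max}$ truncation term), a per-round regression guarantee from the multi-task analysis of Section~\ref{sec:MTL} with $s_2^*=k$, a union bound over the $T$ rounds giving the probability $(1-(p\wedge k)^{-1})^{T}$, and averaging over the $k$ tasks to turn $kr+s_1^*(r+\log p)$ into $r+\tfrac{s_1^*}{k}(r+\log p)$. That is exactly the skeleton of the paper's argument.

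There is, however, one concrete step where your version stalls and the paper's does not, and you flag it yourself as ``the main obstacle'' without resolving it: you control the one-step Bellman residual only through the \emph{in-sample} prediction error $\tfrac1n\|\Phi\Theta^{t}-\Phi\Theta^*_t\|_F^2$ over the design states $S_k$, and then worry about which state distribution the propagated norms live under. The paper never measures the residual in-sample. Instead it bounds it \emph{pointwise at every state}: for any $s'$ and action $b$,
\begin{equation}
\big|\epsilon_{k_0}^t(s',b)\big| = \big|\varphi(s')^{\top}(\Theta^t_{k_0,b}-\Theta^*_{k_0,b})\big| \leq \sup_s\|\varphi(s)\|_2\,\big\|\Theta^t_{k_0,b}-\Theta^*_{k_0,b}\big\|_2 \leq L\,\big\|\Theta^t-\Theta^*\big\|_F,
\end{equation}
and then plugs in the \emph{estimation}-error bound \eqref{eq:stat_error_row} (not the prediction-error bound \eqref{eq:prediction_GDT}) together with the optimization term $C\beta^{T}$ from \eqref{theoremtheta}. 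This yields sup-norm control of $\epsilon^t_{k_0}$, which is what the intermediate inequality of \cite{munos2008finite} actually consumes, and it is why no concentrability coefficients or distribution-shift bookkeeping ever appear. Your route would need exactly this replacement (in-sample prediction error $\to$ parameter error $\to$ uniform residual bound via $\sup_s\|\varphi(s)\|_2\le L$, which also accounts for the $L^4$ rather than $L^2$ in the final bound once the sub-Gaussian proxy of the regression targets is traced through); with that substitution your argument coincides with the paper's. Your remark about verifying the GDT preconditions uniformly over rounds is a fair point, but the theorem statement assumes this away (``the conditions \ldots are satisfied for each $\Theta^*_a$''), so the paper does not address it either.
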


\begin{proof}
We start from the intermediate result in \cite{munos2008finite}:
\begin{equation}
\Big|Q^*_{k_0} - Q_{k_0}^{\pi_{k_0}^{T}} \Big| \leq \frac{2\gamma(1-\gamma^{T+1})}{(1-\gamma)^2} \bigg[ \sum_{t = 0}^{T-1} \alpha_t|\epsilon_{k_0}^t| + \alpha_{T}\big|Q_t^* - Q_t^0\big| \bigg],
\end{equation}
where
\begin{equation}
\alpha_t = \frac{(1-\gamma)\gamma^{T-t-1}}{1-\gamma^{T+1}}\text{, for $t < T$, and } \alpha_{T} =  \frac{(1-\gamma)\gamma^{T}}{1-\gamma^{T+1}}.
\end{equation}

The error term $\epsilon_{k_0}^t(s', b)$ measures the approximation error in state $s' \in S$ and action $b \in A$. It can be bounded by
\begin{equation}
\big|\epsilon_{k_0}^t(s', b)\big| = \big|\varphi(s')^{\top}\Theta^t_{k_0,b} - \varphi(s')^{\top}\Theta^*_{k_0,b}\big| \leq \big\|\varphi(s')\big\|_2\big\|\Theta^t_{k_0,b} - \Theta^*_{k_0,b} \big\|_2 \leq L \big\|\Theta^t_{k_0,b} - \Theta^*_{k_0,b} \big\|_2.
\end{equation}

We then have
\begin{equation}
\Big|Q^*_{k_0} - Q_{k_0}^{\pi_{k_0}^{T}} \Big| \leq \frac{2\gamma(1-\gamma^{T+1})}{(1-\gamma)^2} \bigg[ \sum_{t = 0}^{T-1} \alpha_t L \max_b \big\|\Theta^t_{k_0,b} - \Theta^*_{k_0,b} \big\|_2 + 2\alpha_{T}Q_{\max} \bigg].
\end{equation}

Taking average, and plugging in the main result \eqref{theoremtheta} and the statistical error \eqref{eq:stat_error_row} we obtain our desired result.
\end{proof}

\begin{algorithm}[tb]
   \caption{Multi-Task Reinforcement Learning with GDT}
   \label{algo:MTRL}
\begin{algorithmic}
   \STATE {\bfseries Input: States $S_k = \{{s_i}\}_{i=1}^{n_s} \subseteq S$.}
   \STATE {\bfseries Initialize $\Theta^0 = 0$}
   %\REPEAT
   \FOR{$t=1$ {\bfseries to} $T$}
   \FOR{$a=1$ {\bfseries to} $|A|$}
   \FOR{$k_0=1$ {\bfseries to} $k$, $i=1$ {\bfseries to} $n_s$}
   \STATE Generate samples $r^t_{i,a,k_0} = R_{k_0}(s_{i},a)$ and ${s'}_{i,a,k_0}^{t} \sim P_{a, k_0}( s_{i},s')$
   \STATE Calculate $y_{i,a,k_0}^t = r^t_{i,a,k_0} + \gamma \max_{a'} \hat Q^{t-1}_{k_0}({s'}_{i,a,k_0}^t, a')$
   \ENDFOR
  % \STATE Get new dataset $\mathcal D^t_{a,k_0} = \big\{(s_{i},a), y_{i,a,k_0}^t \big\}_{i=1}^{n_s}$
   \ENDFOR
   \STATE Estimate $\Theta^t$ using GDT algorithm with $X = \left\{ X((s_i,a),\cdot) = \phi(s_i,a)^\top \right\}_{s_i \in S_k , a \in A}$ and $Y = \left\{ Y((s_i,a),k_0) = y^{t}_{i,a,k_0} \right\}_{s_i \in S, a \in A, k_0 \in [k]}$.
   \ENDFOR
   \STATE {\bfseries Output:}  $\Theta^{T} $
   %\UNTIL{$noChange$ is $true$}
\end{algorithmic}
\end{algorithm}

%%% Local Variables:
%%% TeX-master: "paper.tex"
%%% End:

\section{Experiment}
\label{sec:experiment}

In this section we demonstrate the effectiveness of the GDT algorithm by
extensive experiments\footnote{The codes are available at \url{http://home.uchicago.edu/~ming93/research.html}}. Section \ref{sec:simulation} shows results on
synthetic datasets while Section \ref{sec:real1} and \ref{sec:real2}
show results on two real datasets.
%\mcomment{Add a sentence and a link to code before submission.}

\subsection{Synthetic Datasets}
\label{sec:simulation}

We present numerical experiments on MTL problem to support our
theoretical analysis. Throughout this section, we generate the
instances by sampling all entries of design matrix $X$, all nonzero
entries of the true signal $U^*$ and $V^*$, and all entries of the
noise matrix $E$ as i.i.d. standard normal.

\vspace{2mm}
{\bf Linear convergence.} We first demonstrate our linear convergence
result. Because it is hard to quantify linear convergence with
statistical error, we turn to show the linear convergence in some
special cases.  Firstly, as we discussed after
Corollary~\ref{stat_error_thm}, suppose there is no error term $E$ in
the model \eqref{eqn:MTL_model}, then Algorithm \ref{algo:AltGD}
converges linearly to the true coefficient matrix $\Theta^*$.  In this
case we choose $p = 100, k = 50, r = 8, s_1^* = s_2^* = 10$, and the
estimation error is shown in Figure \ref{linear_no_error}.  Secondly,
as we discussed at the end of Section \ref{sec:main-result}, suppose
there are no row or column sparsity constraints on $\Theta^*$, then
Algorithm \ref{algo:AltGD} converges linearly to global minimum
$\hat \Theta$.  In this case it is more likely that we are in low
dimensions, therefore we choose $p = 50$. The estimation error is
shown in Figure \ref{linear_low}. We see that in both cases GDT has
linear convergence rate.

\begin{centering}
\begin{figure*}[tbp]
\begin{minipage}[t]{0.5\linewidth}
\centering
\includegraphics[width=0.9\textwidth]{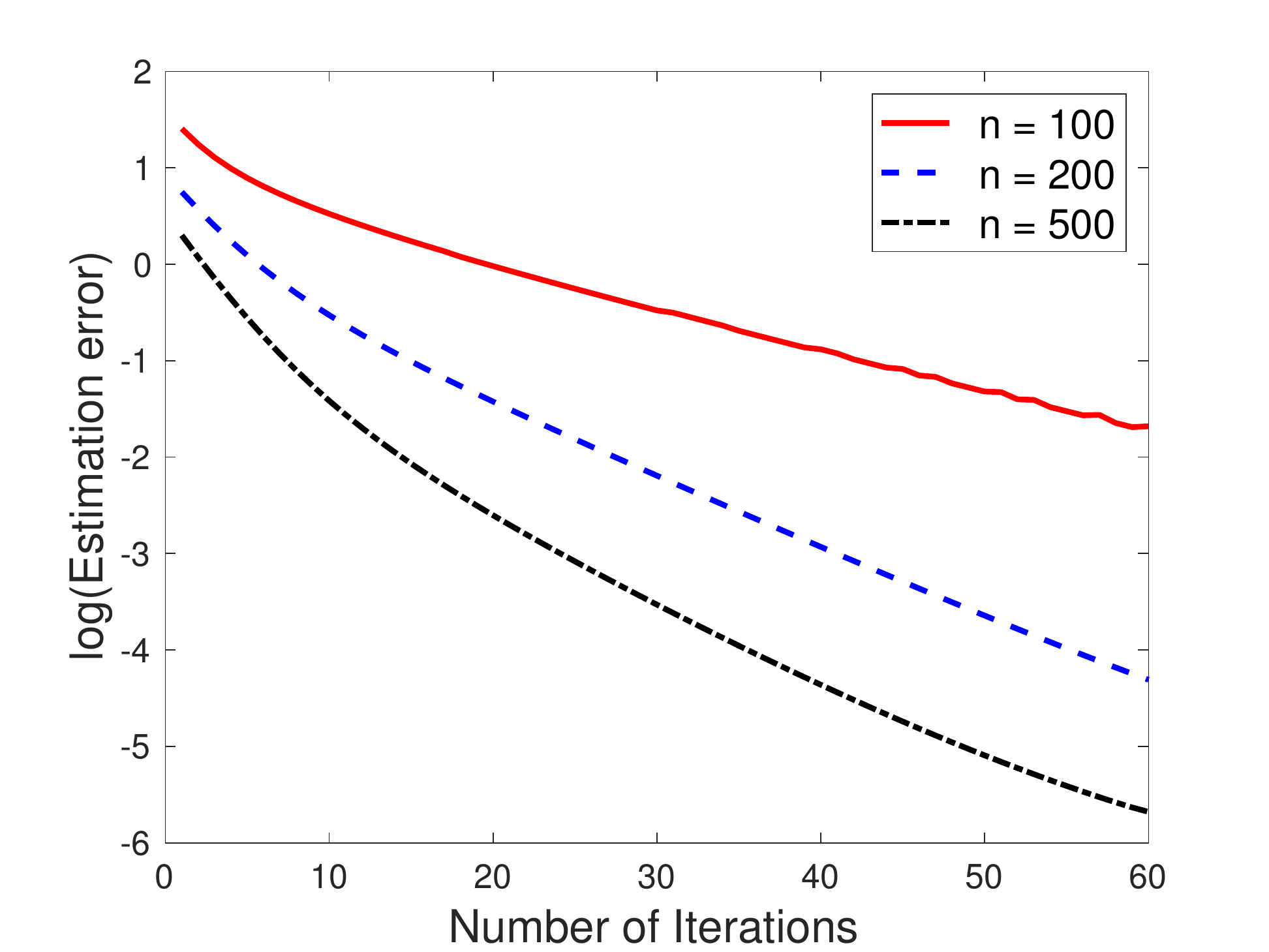}
\caption{No error case}
\label{linear_no_error}
\end{minipage}
\begin{minipage}[t]{0.5\linewidth}
\centering
\includegraphics[width=0.9\textwidth]{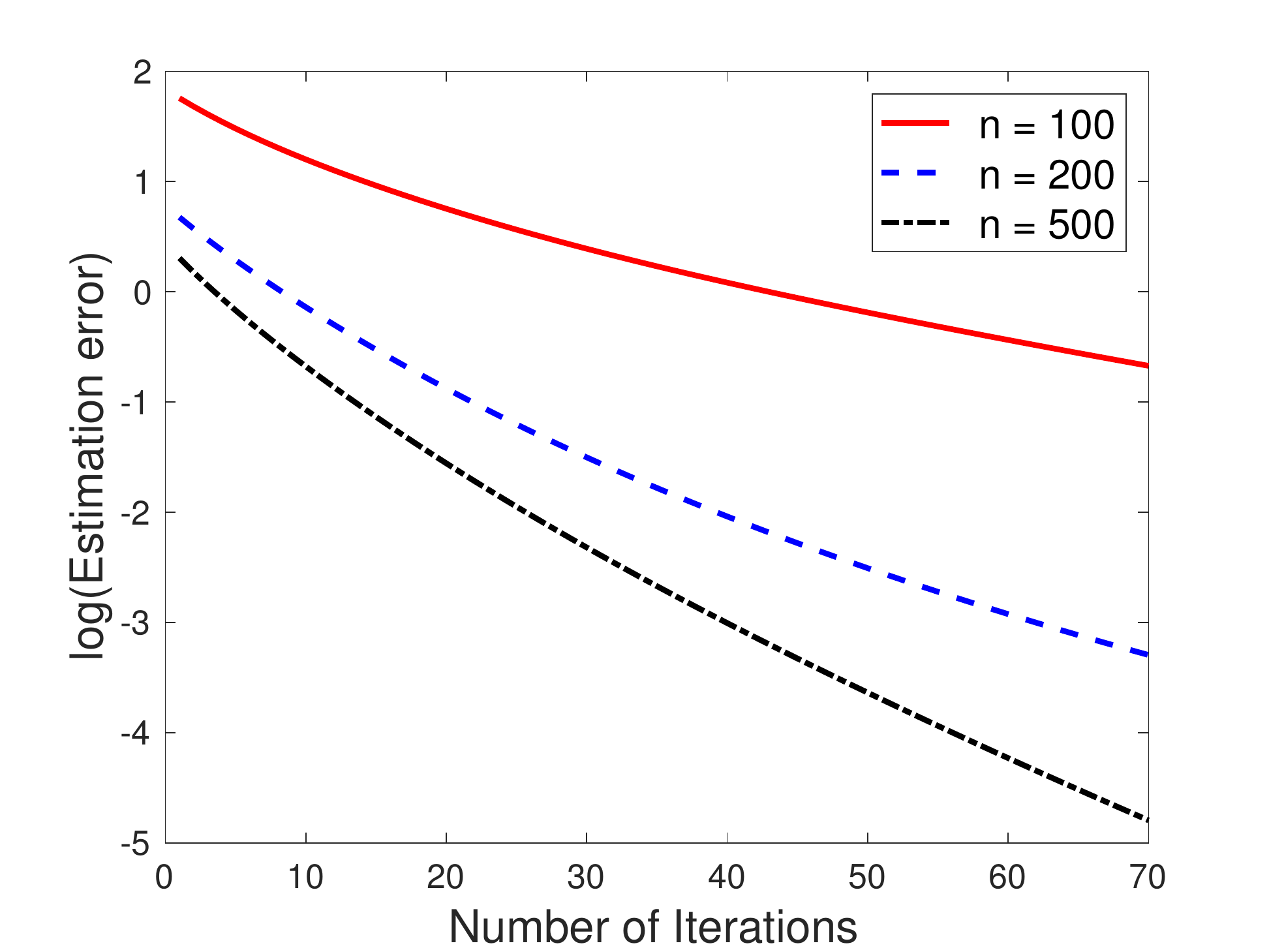}
\caption{No sparsity case}
\label{linear_low}
\end{minipage}
\end{figure*}
\end{centering}

\vspace{2mm}
{\bf Estimation accuracy.}  We compare our algorithm with the
Double Projected Penalization (DPP) method in \cite{Ma2014Adaptive},
the thresholding SVD method (TSVD) method in \cite{Ma2014Learning},
the exclusive extraction algorithm (EEA) in \cite{Chen2011Reduced},
the two methods (denoted by RCGL and JRRS) in \cite{Bunea2012Joint},
and the standard Multitask learning method (MTL, with $L_{2,1}$
penalty).  Here we set $n=50,p=100,k=50,r=8,s_1^* = s_2^* = 10$. The
reason why we choose a relatively small scale is that many other
methods do not scale to high dimensions, as will shown in Table
\ref{runningtime}. We will show the effectiveness of our method in
high dimensions later.  Except for standard MTL, all the other methods
need an estimate of the rank to proceed for which we apply the rank
estimator in \cite{Bunea2011Optimal}. For the methods that rely on 
tuning parameters, we generate an independent validation set
to select the tuning parameters. 

We consider two coefficient matrix settings, one is only row sparse
and the other one is both row sparse and column sparse.  We also
consider strong signal and weak signal settings. The strong signal
setting is described above and for the weak signal setting, we divide
the true $\Theta^*$ by 5, resulting in a signal for which recovering
true non-zero variables becomes much more difficult.  Table \ref{row} (strong
signal, row sparse), Table \ref{rowcolumn} (strong signal, row
\emph{and} column sparse), Table \ref{row_weak} (weak signal, row
sparse) and Table \ref{rowcolumn_weak} (weak signal, row \emph{and}
column sparse) report the mean and the standard deviation of
prediction errors, estimation errors and size of selected models based
on 50 replications in each setting.  We can see that in all the cases
GDT has the lowest estimation error and prediction error.  When the
signal is weak, GDT may underselect the number of nonzero
rows/columns, but it still has the best performance.

\vspace{2mm} {\bf Running time.}  We then compare the running time of
all these methods. We fix a baseline model size
$n=50,p=80,k=50,r=4,s_1^* = s_2^* = 10$, and set a free parameter
$\zeta$. For $\zeta = \{1,5,10,20,50,100\}$, each time we increase
$n,p,s_1^*,s_2^*$ by a factor of $\zeta$ and increase $k,r$ by a
factor of $\lfloor\sqrt\zeta\rfloor$ and record the running time (in
seconds) of each method for a fixed tolerance level, whenever
possible. If for some $\zeta$ the algorithm does not converge in 2
hours then we simply record ``$>$2h'' and no longer increase $\zeta$
for that method.  Table \ref{runningtime} summarizes the results. We
can see that GDT is fast even in very high dimension, while all of the
other methods are computationally expensive.  We note that even though
GDT uses the lasso estimator in the initialization step, all the
variables are used in the subsequent iterations and not only the ones
selected by the lasso. In particular, the speed of the method does
not come from the initialization step.

\begin{table}[tp]
\caption{Strong signal, Row sparse}
\begin{center}
{
\begin{tabular}{cccc}
\hline
& Estimation error & Prediction error & $|$Row support$|$ \\ \hline
{\bf GDT} & 0.0452  $\pm$  0.0110   & 1.1060  $\pm$  0.0248 &  10.16  $\pm$    0.51        \\
DPP &  0.0584  $\pm$  0.0113   & 1.1290  $\pm$  0.0357  & 52.64  $\pm$  15.2 \\
TSVD & 0.3169 $\pm$  0.1351 & 2.4158 $\pm$  0.9899 & 25.62 $\pm$  8.03 \\
EEA &  0.3053 $\pm$  0.0998 & 1.2349  $\pm$ 0.0362 & 84.28  $\pm$ 6.70 \\
RCGL & 0.0591  $\pm$  0.0148 &   1.1101  $\pm$  0.0168  & 49.60  $\pm$ 10.6  \\
JRRS & 0.0877 $\pm$   0.0227  &  1.1857  $\pm$  0.0214  & 12.26  $\pm$  2.02 \\
%Sel-RRR & 0.0609  $\pm$  0.0162  &  1.1117  $\pm$  0.0162  & 20  $\pm$  0 \\
MTL & 0.0904  $\pm$  0.0243  &  1.1753  $\pm$  0.0204  & 73.40  $\pm$  2.67 \\
\hline
\end{tabular}
}
\end{center}
\label{row}
\end{table}%

\begin{table}[tp]
\caption{Strong signal, Row sparse and column sparse}
\begin{center}
{
\begin{tabular}{ccccc}
\hline
& Estimation error & Prediction error & $|$Row support$|$ & $|$Column support$|$ \\ \hline
{\bf GDT} & 0.0624  $\pm$  0.0121  &  1.0353  $\pm$  0.0167 &  10.24    $\pm$  0.65  & 10.24  $\pm$  0.68\\
DPP &  0.0921  $\pm$ 0.0251  &  1.0790  $\pm$  0.0295 &  54.10  $\pm$ 18.25  & 10.38  $\pm$  0.60\\
TSVD & 0.3354 $\pm$  0.1053 & 1.7600 $\pm$  0.3415 & 28.66 $\pm$ 7.27 & 30.88 $\pm$  8.46  \\
EEA & 0.2604 $\pm$ 0.1159 & 1.1023 $\pm$ 0.0220  & 64.44 $\pm$ 9.88 & 12.10 $\pm$ 2.69 \\
RCGL & 0.1217  $\pm$  0.0325  &  1.1075  $\pm$  0.0174  & 42.06  $\pm$  7.93  & 50   $\pm$      0 \\
JRRS & 0.1682  $\pm$  0.0410  &  1.1612  $\pm$  0.0174  & 13.96  $\pm$  4.69  & 50    $\pm$     0 \\
%Sel-RRR & 0.1558  $\pm$  0.0852 &   1.1301  $\pm$  0.0392  & 20   $\pm$      0  & 50   $\pm$     0  \\
MTL& 0.1837  $\pm$  0.0499 &   1.1652   $\pm$ 0.0160 &  73.50   $\pm$ 3.17  & 50     $\pm$    0\\\hline
\end{tabular}
}
\end{center}
\label{rowcolumn}
\end{table}%

% one

\begin{table}[tp]
\caption{Weak signal, Row sparse}
\begin{center}
{
\begin{tabular}{cccc}
\hline
& Estimation error & Prediction error & $|$Row support$|$ \\ \hline

{\bf GDT} & 0.2328 $\pm$ 0.0474 & 1.1282 $\pm$ 0.0231 & 10.08 $\pm$ 0.56\\ 
DPP &0.2954 $\pm$ 0.0640 & 1.1624 $\pm$ 0.0315 & 47.26 $\pm$ 11.7\\ 
TSVD &0.5842 $\pm$ 0.1020 & 1.4271 $\pm$ 0.0903 & 30.81 $\pm$ 4.72\\ 
EEA & 0.3802 $\pm$ 0.0787 & 1.1647 $\pm$ 0.0206 & 46.16 $\pm$ 8.97\\
RCGL & 0.2775 $\pm$ 0.0605 & 1.1493 $\pm$ 0.0291 & 37.92 $\pm$ 14.4\\ 
JRRS & 0.3600 $\pm$ 0.0752 & 1.1975 $\pm$ 0.0392 & 11.74 $\pm$ 1.35 \\
MTL& 0.3577 $\pm$ 0.0721 & 1.2140 $\pm$ 0.0418 & 69.92 $\pm$ 12.8\\ 
\hline
\end{tabular}
}
\end{center}
\label{row_weak}
\end{table}%

% two

\begin{table}[tp]
\caption{Weak signal, Row sparse and column sparse}
\begin{center}
{
\begin{tabular}{ccccc}
\hline
& Estimation error & Prediction error & $|$Row support$|$ & $|$Column support$|$ \\ \hline

{\bf GDT} & 0.3173 $\pm$ 0.0949 & 1.0380 $\pm$ 0.0218 & 9.56 $\pm$ 1.56 & 10.06 $\pm$ 1.21\\ 
DPP &0.3899 $\pm$ 0.0737 & 1.0580 $\pm$ 0.0216 & 50.66 $\pm$ 12.86 & 13.52 $\pm$ 5.02\\ 
TSVD &0.6310 $\pm$ 0.1074 & 1.1372 $\pm$ 0.0246 & 49.94 $\pm$ 5.53 & 43.38 $\pm$ 2.55\\ 
EEA & 0.6016 $\pm$ 0.0965 & 1.0874 $\pm$ 0.0197 & 30.64 $\pm$ 8.65 & 30.64 $\pm$ 8.65\\
RCGL & 0.4601 $\pm$ 0.0819 & 1.1017 $\pm$ 0.0262 & 28.9 $\pm$ 12.36 & 50 $\pm$ 0\\ 
JRRS & 0.5535 $\pm$ 0.0866 & 1.1164 $\pm$ 0.0262 & 12.42 $\pm$ 6.02 & 50 $\pm$ 0 \\
MTL& 0.5776 $\pm$ 0.0873 & 1.1286 $\pm$ 0.0296 & 53.0 $\pm$ 18.41 & 50 $\pm$ 0\\ \hline
\end{tabular}
}
\end{center}
\label{rowcolumn_weak}
\end{table}%

\begin{table}[tp]
\caption{Running time comparison (in seconds)}
\begin{center}
{
\begin{tabular}{ccccccc}
\hline
& $\zeta = 1$ & $\zeta = 5$ & $\zeta = 10$ & $\zeta = 20$ & $\zeta = 50$ & $\zeta = 100$   \\ \hline
{\bf GDT} &  0.11 & 0.20 & 0.51 & 2.14 & 29.3 & 235.8\\
DPP & 0.19 & 0.61 & 3.18 & 17.22 & 315.4 & 2489\\
TSVD & 0.07 & 1.09 & 6.32 & 37.8 & 543 & 6075 \\
EEA & 0.50 & 35.6 & 256 & $>$2h & $>$2h & $>$2h \\
RCGL & 0.18& 1.02 & 7.15 & 36.4 & 657.4 & $>$2h\\
JRRS & 0.19 & 0.82 & 6.36 & 30.0 & 610.2 & $>$2h\\
%Sel-RRR & 0.22 & 0.34 & 0.93 & 3.94 & 35.8 & 184.8 \\
MTL& 0.18 & 3.12 & 30.92 & 184.3  & $>$2h & $>$2h\\\hline
\end{tabular}
}
\end{center}
\label{runningtime}
\end{table}%

\vspace{2mm}
{\bf Effectiveness in high dimension.}  We finally demonstrate the
effectiveness of GDT algorithm in high dimensions. Table \ref{row} and
Table \ref{rowcolumn} are both in low dimensions because we want to
compare with other algorithms and they are slow in high dimensions, as
shown in Table \ref{runningtime}.  Now we run our algorithm only and
we choose $p = 5000, k = 3000, r = 50, s_1^* = s_2^* = 100$. The
estimation error and objective value are shown in Figure
\ref{high_estimation} and Figure \ref{high_obj}, respectively. In each
figure, iteration 0 is for initialization we obtained by
Lasso. %according to Remark \ref{MTL_samples}.
We can see that both estimation error and objective value continue to
decrease, which demonstrates the effectiveness and necessity of GDT
algorithm.  From Figure \ref{high_estimation} we also find that early
stopping can help to avoid overfitting (although not too much),
especially when $n$ is small.

\begin{centering}
\begin{figure*}[tbp]
\begin{minipage}[t]{0.5\linewidth}
\centering
\includegraphics[width=0.9\textwidth]{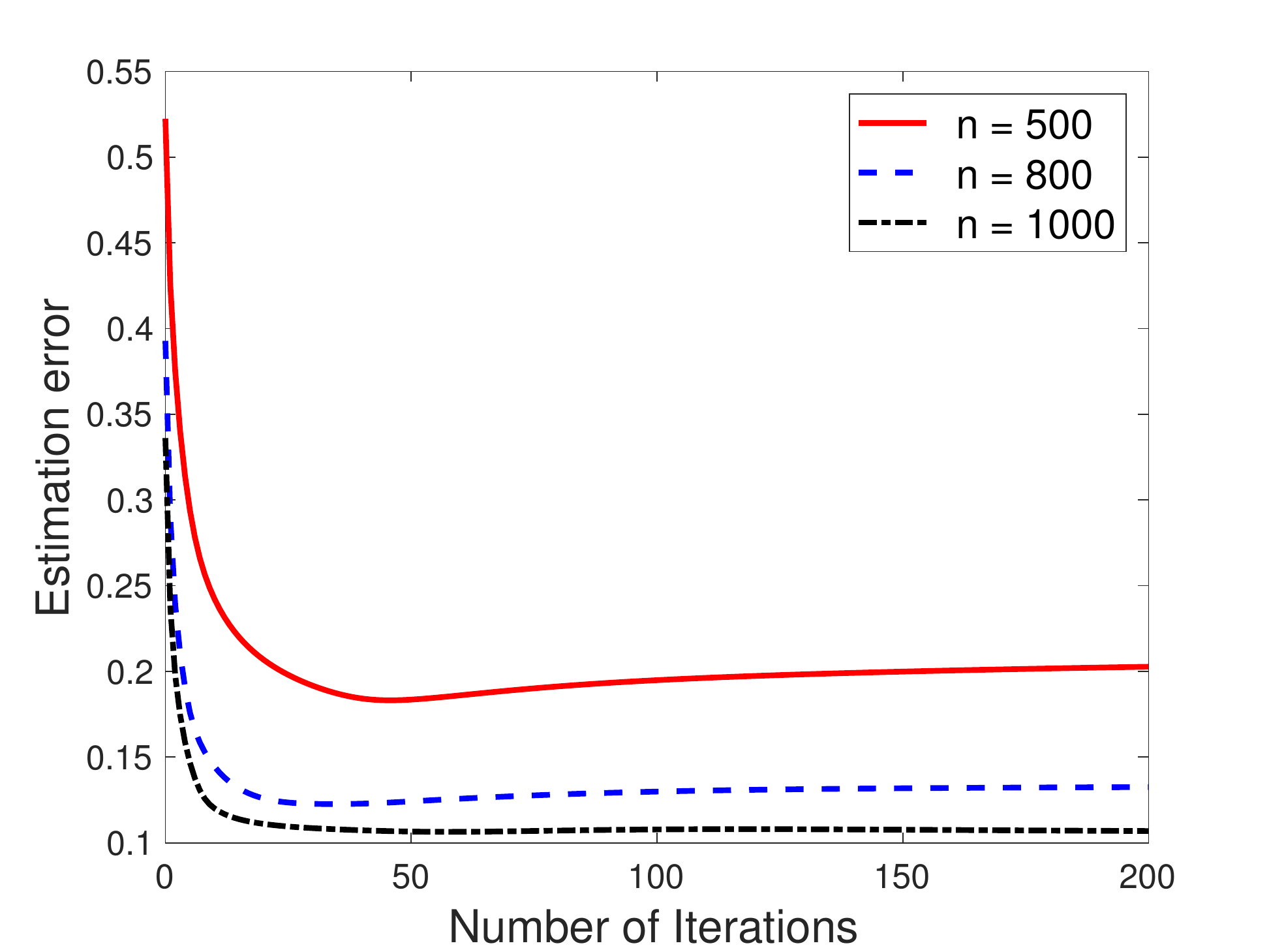}
\caption{Estimation error}
\label{high_estimation}
\end{minipage}
\begin{minipage}[t]{0.5\linewidth}
\centering
\includegraphics[width=0.9\textwidth]{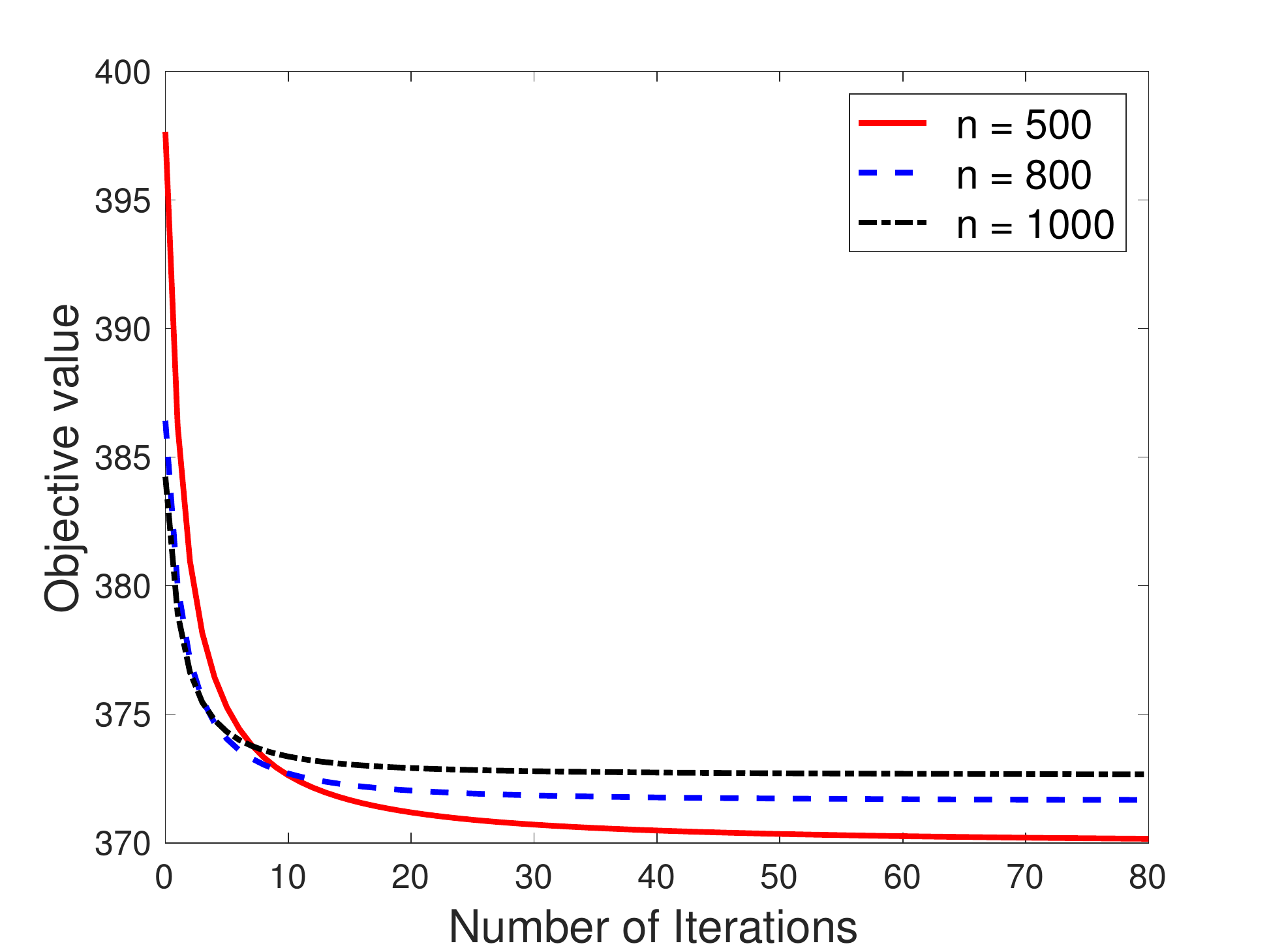}
\caption{Objective value}
\label{high_obj}
\end{minipage}
\end{figure*}
\end{centering}

\subsection{Norwegian Paper Quality Dataset}
\label{sec:real1}

In this section we apply GDT to Norwegian paper quality dataset.  This
data was obtained from a controlled experiment that was carried out at
a paper factory in Norway to uncover the effect of three control
variables $X_1,X_2,X_3$ on the quality of the paper which was measured
by 13 response variables. Each of the control variables $X_i$ takes
values in $\{-1,0,1\}$. To account for possible interactions and
nonlinear effects, second order terms were added to the set of
predictors, yielding
$X_1, X_2, X_3, X_1^2, X_2^2, X_3^2, X_1 \cdot X_2, X_1 \cdot X_3,
X_2\cdot X_3$.

The data set can be downloaded from the website of
\cite{izenman2008modern} and its structure clearly indicates that
dimension reduction is possible, making it a typical application for
reduced rank regression methods \cite{izenman2008modern,
  aldrin1996moderate, Bunea2012Joint, She2017Selective}. Based on the
analysis of \cite{Bunea2011Optimal} and \cite{aldrin1996moderate} we
select the rank $\hat r = 3$; also suggested by
\cite{Bunea2011Optimal} we take $s_1=6$ and $s_2 = k = 13$ which means
we have row sparsity only. GDT selects 6 of the original 9 predictors,
with $X_1^2,X_1\cdot X_2$ and $X_2\cdot X_3$ discarded, which is
consistent with the result in \cite{Bunea2011Optimal}.

To compare prediction errors, we split the whole dataset at random,
with 70\% for training and 30\% for test, and repeat the process 50
times to compare the performance of the above methods. All tuning
parameters are selected by cross validation and we always center the
responses in the training data (and transform the test data
accordingly). The average RMSE on test set is shown in Table
\ref{RMSE}.  We can see that GDT is competitive with the best method,
demonstrating its effectiveness on real datasets.

\begin{table}[htp]
\caption{RMSE on paper quality dataset}
\begin{center}
{\footnotesize
\begin{tabular}{ccccccc}
\hline
{\bf GDT} & DPP & TSVD & EEA & RCGL & JRRS  & MTL \\\hline
1.002 & 1.012 & 1.094 & 1.161 & 1.001 & 1.013 & 1.014 \\\hline
\end{tabular}
}
\end{center}
\label{RMSE}
\end{table}%

\subsection{Calcium Imaging Data}
\label{sec:real2}

As a microscopy technique in neuroscience, calcium imaging is gaining
more and more attentions \cite{haeffele2014structured}. It records
fluorescent images from neurons 
and allows us to identify the spiking activity of the neurons.
%that have been loaded with genetically
%encoded fluorescent calcium indicator molecules. When a neuron fires
%an electrical action potential (or spike), calcium will enter the
%cell, attach to the fluorescent calcium indicator molecules, and then
%change its fluorescent properties. By recording the movies of this
%dynamic it allows us to identify the spiking activity from large
%populations of neurons.
To achieve this goal, \cite{pnevmatikakis2014structured} introduces a
spatiotemporal model and we briefly introduce this model here. 
More detailed description can be found in \cite{pnevmatikakis2014structured} and \cite{Ma2014Adaptive}. 
Denote $k = \ell_1 \times \ell_2$ as the pixels we observe, and denote $K$ as the total number of neurons. The observation time step is $t = 1, ..., T$.
Let 
%$C \in \mathbb{R}^{T\times K}$ be the calcium activity at each time step and for each neuron;
$S \in \mathbb{R}^{T \times K}$ be the number of spikes at each time step and for each neuron;
$A \in \mathbb{R}^{K \times k}$ be the nonnegative spatial footprint for each neuron at each pixel;
$Y \in \mathbb{R}^{T \times k}$ be the observation at each time step and at each pixel; and 
$E \in \mathbb{R}^{T \times k}$ be the observation error. 
Ignore the baseline vector for all the pixels, the model in \cite{pnevmatikakis2014structured} is given by 
\begin{equation}
\begin{aligned}
Y &= G^{-1}SA + E = X \Theta^* + E
\end{aligned}
\label{calcium_MTL}
\end{equation}
where $\Theta^* = SA$ is the coefficient
matrix and $X = G^{-1}$ is observed with
\begin{equation}
G = 
\begin{pmatrix}
    1 & 0 & \dots & 0 \\
    -\gamma & 1 & \ddots  & \vdots \\
    \vdots & \ddots & \ddots & 0 \\
    0 & \dots & -\gamma & 1
\end{pmatrix}.
\label{calcium_G}
\end{equation}

Here $\gamma$ is set to be $\gamma = 1 - 1/(\text{frame rate})$ as suggested by
\cite{vogelstein2010fast}. 
From the settings we see that each row of $S$ is the activation for all the neurons, and therefore it is natural to have $S$ to be row sparse since usually we would not observe too many activations in a fixed time period;
also, each column of $A$ is the footprint for all the neurons at each pixel, and therefore it is natural to have $A$ to be column sparse since we expect to see only a few neurons in a fixed area.
%The support of $A$ is the location of the neurons and the
%support of $S$ is the time frames when the neurons fire.  It is
%natural to have $A$ to be row sparse since the area for neurons in the
%monitored area is small; and $S$ to be column sparse since the neurons
%do not fire very frequently. 
Therefore our coefficient matrix
$\Theta^* = SA$ would be both row sparse and column sparse. 
It is also low
rank since it is the product of two ``tall" matrices because the
number of neurons $K$ are usually small. Now we see this is a multi-task
learning problem with simultaneous row-sparse, column-sparse and low
rank coefficient matrix where $n = p = T$ and
$k = \ell_1 \times \ell_2$.

We consider the calcium imaging data in
%{\em in vivo} from the
%primary auditory cortex of a mouse that was transfected with the
%genetic calcium indicator GCaMP5 
\cite{akerboom2012optimization} 
which is a movie with 559 frames (acquired at approximately 8.64
frames/sec), where each frame is $135 \times 131$ pixels.  
This dataset is also analyzed in \cite{Ma2014Adaptive} and \cite{haeffele2014structured}. 
For this dataset, we have $n=p=559$ and $k=135 \times 131 = 17,685$. We use $r = 50$,
more conservative than the estimator given by \cite{Bunea2011Optimal}
and we set $s_1 = 100$ row sparsity and $s_2 = 3000$ column sparsity.
Figure \ref{true_signals} shows five most significant manually labeled
regions; Figure \ref{our_signals} are the corresponding signals
estimated by our GDT algorithm. We can see that they match very
well, which demonstrates the effectiveness of our method.

\begin{figure}[t]
\begin{minipage}[t]{0.192\linewidth}
\centering
\includegraphics[width=\textwidth]{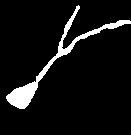}
\end{minipage}
\begin{minipage}[t]{0.192\linewidth}
\centering
\includegraphics[width=\textwidth]{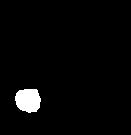}
\end{minipage}
\begin{minipage}[t]{0.192\linewidth}
\centering
\includegraphics[width=\textwidth]{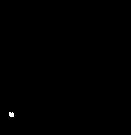}
\end{minipage}
\begin{minipage}[t]{0.192\linewidth}
\centering
\includegraphics[width=\textwidth]{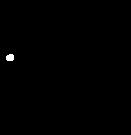}
\end{minipage}
\begin{minipage}[t]{0.192\linewidth}
\centering
\includegraphics[width=\textwidth]{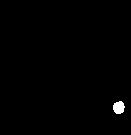}
\end{minipage}
\caption{Manually selected top 5 labeled regions}
\label{true_signals}
%\end{figure}

\vspace{5mm}

%\begin{figure}[t]
\begin{minipage}[t]{0.192\linewidth}
\centering
\includegraphics[width=\textwidth]{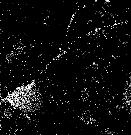}
\end{minipage}
\begin{minipage}[t]{0.192\linewidth}
\centering
\includegraphics[width=\textwidth]{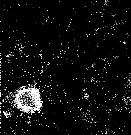}
\end{minipage}
\begin{minipage}[t]{0.192\linewidth}
\centering
\includegraphics[width=\textwidth]{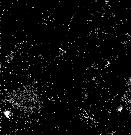}
\end{minipage}
\begin{minipage}[t]{0.192\linewidth}
\centering
\includegraphics[width=\textwidth]{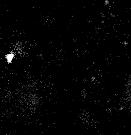}
\end{minipage}
\begin{minipage}[t]{0.192\linewidth}
\centering
\includegraphics[width=\textwidth]{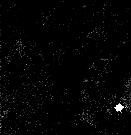}
\end{minipage}
\caption{Corresponding signals estimated by our GDT algorithm}
\label{our_signals}
\end{figure}

%%% Local Variables:
%%% TeX-master: "paper.tex"
%%% End:

\section{Technical Proofs}
\label{sec:appendix}

This section collects technical proofs.

\subsection{Proof of Lemma \ref{lemmaInit}}
\label{A:lemmaInit}
 
Let $[\tilde U, \tilde\Sigma, \tilde V] = \text{rSVD}(\Theta^0)$
be the rank $r$ SVD of the matrix ${\Theta^0}$ and let
\[ \tilde\Theta = \tilde U\tilde\Sigma(\tilde V)^{\top}
 = \arg\min_{{\rm rank}(\Theta) \leq r} \|\Theta -
\Theta^0\|_F.
\]
Since $\tilde\Theta$ is the best rank $r$ approximation to ${\Theta^0}$, we have
\begin{equation}
\|\tilde\Theta - \Theta^0\|_F \leq  \|\Theta^0 - \Theta^*\|_F. 
\label{Init3}
\end{equation}
The triangle inequality gives us 
\begin{equation}
\|\tilde\Theta - \Theta^*\|_F \leq  \|\Theta^0 - \Theta^*\|_F +  \|\Theta^0 - \tilde\Theta\|_F \leq 2 \|\Theta^0 - \Theta^*\|_F.
\label{Init4}
\end{equation}
Now that both $\tilde \Theta$ and $\Theta^*$ are rank $r$ matrices, and
according to \eqref{eq:ass_init} we have
\begin{equation}
\|\tilde\Theta - \Theta^*\|_F \leq 2 \|\Theta^0 - \Theta^*\|_F \leq \frac 12 \sigma_r(\Theta^*).
\label{Init5}
\end{equation}
Then, Lemma 5.14 in \cite{Tu2016Low} gives us 
\begin{equation}
\begin{aligned}
d^2\bigg( \sbr{\begin{array}{c} \tilde U \tilde \Sigma^{\frac 12} \\ \tilde V \tilde \Sigma^{\frac 12} \end{array}}, \sbr{\begin{array}{c}U^* \\ V^* \end{array}} \bigg) &\leq \frac{2}{\sqrt 2-1} \cdot \frac{\|\tilde\Theta - \Theta^*\|_F^2}{\sigma_r(\Theta^*)} \\
&\leq \frac{2}{\sqrt 2-1} \cdot \frac{4}{\sigma_r(\Theta^*)} \cdot \frac{I_0^2}{25\xi^2} \cdot \sigma_r(\Theta^*)\\
&\leq \frac{I_0^2}{\xi^2}
\end{aligned}
\end{equation}
where the second inequality comes from the initialization condition
\eqref{eq:ass_init}. Finally, Lemma 3.3 in \cite{Li2016Stochastic}
gives
\begin{equation}
\begin{aligned}
d^2\bigg( \sbr{\begin{array}{c} U^0\\ V^0 \end{array}}, \sbr{\begin{array}{c}U^* \\ V^* \end{array}} \bigg) &\leq \xi^2
d^2\bigg( \sbr{\begin{array}{c} \tilde U \tilde \Sigma^{\frac 12} \\ \tilde V \tilde \Sigma^{\frac 12} \end{array}}, \sbr{\begin{array}{c}U^* \\ V^* \end{array}} \bigg) \leq I_0^2.
\end{aligned}
\end{equation}

\subsection{Proof of Lemma \ref{lemma:iteration_contraction}}
\label{A:lemma:iteration_contraction}

For notation simplicity, let
$Z = \sbr{\begin{array}{c}U \\ V \end{array}}$ denote the current
iterate and let $Z^+ = \sbr{\begin{array}{c}U^+ \\ V^+ \end{array}}$
denote the next iterate.  Let
$S_U = \Scal(U) \cup \Scal(U^+) \cup \Scal(U^*)$ and
$S_V = \Scal(V) \cup \Scal(V^+) \cup \Scal(V^*)$.  With some abuse of
notation, we define the index set $S_Z = S_U \cup S_V$ to represent
coordinates of $Z$ corresponding to $U_{S_U}$ and $V_{S_V}$.  For an
index set $S$, let
$\Pcal(U, S) = \sbr{\begin{array}{c}U_S \\ 0_{S^C}\end{array}}$. Let
$G(U,V) = f(U,V)+g(U,V)$.  Finally, let $\Delta_U = U - U^*\hat O$,
$\Delta_V = V - V^*\hat O$ and $\Delta_Z = Z - Z^*\hat O$. With these notations, we can write
\[
U^+ = {\rm Hard}(U - \eta \cdot \nabla G_U(U, V), s_1) = {\rm Hard}\rbr{U - \eta\cdot \Pcal\rbr{\nabla G_U(U, V),S_U}, s_1}
\]
and
\[
V^+ = {\rm Hard}(V - \eta \cdot \nabla G_V(U, V), s_2) = {\rm Hard}\rbr{V - \eta \cdot \Pcal\rbr{\nabla G_V(U, V), S_V}, s_2}.
\]
Let $\hat O \in \Ocal(r)$ be such that
\[
d^2(Z, Z^*) = \|U - U^*\hat O \|_F^2 + \|V - V^*\hat O \|_F^2.
\]
We have that 
\begin{equation}
\begin{aligned}
   d^2(Z^+, Z^*) 
&= \min_{O \in \Ocal(r)} \bigg\|
\sbr{
\begin{array}{c}
U^+ \\ V^+
\end{array}
}
-
\sbr{
\begin{array}{c}
U^*{O} \\ V^*{O}
\end{array}
}\bigg\|_F^2 \\
&\leq
\bigg\|
\sbr{
\begin{array}{c}
{\rm Hard}\rbr{U - \eta\cdot \Pcal\rbr{\nabla G_U(U, V), S_U}, s_1} \\ 
{\rm Hard}\rbr{V - \eta\cdot \Pcal\rbr{\nabla G_V(U, V), S_V}, s_2}
\end{array}
}
-
\sbr{
\begin{array}{c}
U^*\hat{O} \\ V^*\hat{O}
\end{array}
}\bigg\|_F^2 \\
&\leq \rbr{1 + \frac{2}{\sqrt{c-1}}}
\big\|Z - \eta \cdot \Pcal\rbr{\nabla G_Z(Z),S_Z} - Z^*\hat{O}\big\|_F^2,
\end{aligned}
\end{equation}
where the last inequality follows from Lemma~3.3 of
\cite{Li2016Stochastic}.  Therefore,
\begin{equation}
\label{eq:dz_bound}
d^2(Z^+, Z^*) \leq
 \rbr{1 + \frac{2}{\sqrt{c - 1}}}
\big[ d^2(Z, Z^*) - 2\eta \cdot (T_1+R_1) + 2\eta^2 \cdot (T_2+R_2) \big]
\end{equation}
where $T_1 = \dotp{\Pcal\rbr{\nabla f_Z(Z),S_Z}}{\Delta_Z}$, 
$T_2 = \big\|\sbr{\nabla f_Z(Z)}_{S_Z}\big\|_F^2$, $R_1 = \dotp{\Pcal\rbr{\nabla g_Z(Z),S_Z}}{\Delta_Z}$, and $R_2 = \big\|\sbr{\nabla g_Z(Z)}_{S_Z}\big\|_F^2$.

For the term $T_1$, we have
\begin{equation}
\begin{aligned}
\label{eq:T_1}
T_1 
& =    
\bdotp{\Pcal\rbr{\nabla f(UV^\top)V, S_U}}{\Delta_U} + 
\bdotp{\Pcal\rbr{\nabla f(UV^\top)^\top U, S_V}}{\Delta_V} \\
& =    
\underbrace{
\bdotp{\sbr{\nabla f(UV^\top) - \nabla f(U^*{V^*}^\top)}_{S_U,S_V}}{\sbr{UV^\top - U^*{V^*}^\top}_{S_U,S_V}}}_{T_{11}} \\
& \quad +
\underbrace{
\bdotp{\sbr{\nabla f(U^*{V^*}^\top)}_{S_U,S_V}}{\sbr{UV^\top - U^*{V^*}^\top}_{S_U,S_V}} 
}_{T_{12}} \\
& \quad + 
\underbrace{
\bdotp{\sbr{\nabla f(UV^\top)}_{S_U,S_V}}{\sbr{\Delta_U\Delta_V^\top}_{S_U,S_V}} 
}_{T_{13}}.
\end{aligned}
\end{equation}
Theorem 2.1.11 of
\cite{Nesterov2013Introductory} 
gives
\begin{equation}
\begin{aligned}
\label{eq:T_11}
T_{11} & \geq 
\frac{L \cdot\mu }{L  + \mu }\cdot\Big\|{UV^\top - U^*{V^*}^\top}\Big\|_F^2
 + \frac{1}{L +\mu }\cdot\Big\|\sbr{\nabla f(UV^\top) - \nabla f(U^*{V^*}^\top)}_{S_U,S_V}\Big\|_F^2 
\end{aligned}
\end{equation}
Next, we have
\begin{equation}
\begin{aligned}
\label{eq:T12}
T_{12} & \geq -
\abr{
\bdotp{\sbr{\nabla f(U^*{V^*}^\top)}_{S_U,S_V}}{\sbr{UV^\top - U^*{V^*}^\top}_{S_U,S_V}} 
} \\
& \stackrel{(i)}{\geq} - e_{\rm stat}\cdot \Big\|{UV^\top - U^*{V^*}^\top}\Big\|_F \\
& \stackrel{(ii)}{\geq}  - \frac{1}{2}\frac{L  + \mu }{L \cdot\mu }e_{\rm stat}^2 -
\frac{1}{2}\frac{L \cdot\mu }{L  + \mu }\cdot\Big\|{UV^\top - U^*{V^*}^\top}\Big\|_F^2
\end{aligned}
\end{equation}
where in $(i)$ follows from the definition of statistical error and in
$(ii)$ we used the Young's inequality
$ab \leq \frac{a^2}{2\epsilon} + \frac{\epsilon b^2}{2}$, for
$a,b,\epsilon > 0$. 
Therefore, 
\begin{equation}
\begin{aligned}
\label{eq:T_11_plus_T_12}
T_{11} + T_{12} & \geq
\frac{1}{2}\frac{L \cdot\mu }{L  + \mu }\cdot\Big\|{UV^\top - U^*{V^*}^\top}\Big\|_F^2
 + \frac{1}{L +\mu }\cdot\Big\|\sbr{\nabla f(UV^\top) - \nabla f(U^*{V^*}^\top)}_{S_U,S_V}\Big\|_F^2 \\
&\qquad - \frac{1}{2}\frac{L  + \mu }{L \cdot\mu }\cdot e_{\rm stat}^2.
\end{aligned}
\end{equation}
Finally, for the term $T_{13}$, we have 
\begin{equation}
\begin{aligned}
T_{13} 
& \geq - \abr{ \bdotp{\sbr{\nabla f(UV^\top)}_{S_U,S_V}}{\sbr{\Delta_U\Delta_V^\top}_{S_U,S_V}} } \\ 
& \geq - \abr{ \bdotp{\sbr{\nabla f(U^*{V^*}^\top)}_{S_U,S_V}}{\sbr{\Delta_U\Delta_V^\top}_{S_U,S_V}} } 
 - \abr{ \bdotp{\sbr{\nabla f(UV^\top) - \nabla f(U^*{V^*}^\top)}_{S_U,S_V}}{\sbr{\Delta_U\Delta_V^\top}_{S_U,S_V}} } \\ 
& \geq - \rbr{e_{\rm stat} + \Big\|\sbr{\nabla f(UV^\top) - \nabla f(U^*{V^*}^\top)}_{S_U,S_V}\Big\|_F}\cdot d^2(Z, Z^*),
\end{aligned}
\end{equation}
where the last inequality follows from the definition of statistical
error and the observation
$\|\Delta_U\Delta_V^\top\|_F \leq \|\Delta_V\|_F\cdot \|\Delta_U\|_F
\leq d^2(Z, Z^*)$.  Under the assumptions, 
\begin{equation}
d^2(Z, Z^*) \leq {\frac{4\mu_{\min}\sigma_r(\Theta^*)}{5(\mu +L )}}
\end{equation}
and therefore 
\begin{equation}
\begin{aligned}
\label{eq:T_13}
T_{13} 
& \geq - \rbr{e_{\rm stat} + \Big\|\sbr{\nabla f(UV^\top) - \nabla f(U^*{V^*}^\top)}_{S_U,S_V}\Big\|_F}\cdot
\sqrt{\frac{4\mu_{\min}\sigma_r(\Theta^*)}{5(\mu +L )}}
\cdot 
d(Z, Z^*) \\
& \geq -\frac{1}{2(\mu +L )}\cdot\rbr{e_{\rm stat}^2  
 + \Big\|\sbr{\nabla f(UV^\top) - \nabla f(U^*{V^*}^\top)}_{S_U,S_V}\Big\|_F^2} 
- \frac 45 \mu_{\min} \sigma_r(\Theta^*)
\cdot d^2(Z, Z^*).
\end{aligned}
\end{equation}
Combining \eqref{eq:T_11_plus_T_12} and \eqref{eq:T_13} we have 
\begin{equation}
\begin{aligned}
\label{eq:T_1_simplified}
T_{1} & \geq
\underbrace{\frac{1}{2}\frac{L \cdot\mu }{L  + \mu }\cdot\Big\|{UV^\top - U^*{V^*}^\top}\Big\|_F^2}_{T_{1a}}
- \frac 45 \mu_{\min} \sigma_r(\Theta^*) \cdot d^2(Z, Z^*) - \frac{1}{2}\rbr{\frac{L  + \mu }{L \cdot\mu }+\frac{1}{L +\mu }}\cdot e_{\rm stat}^2 \\
 &\qquad + \frac{1}{2(L +\mu )}\cdot\Big\|\sbr{\nabla f(UV^\top) - \nabla f(U^*{V^*}^\top)}_{S_U,S_V}\Big\|_F^2 .
\end{aligned}
\end{equation}

For the term $T_2$,  we have
\begin{equation}
\begin{aligned}
\Big\|[\nabla f(U^*{V^*}^\top)V]_{S_U}\Big\|_F &= \sup_{\|U_{S_U}\|_F=1} \tr\big(\nabla f(U^*{V^*}^\top)VU_{S_U}^\top\big) \\
&= \sup_{\|U_{S_U}\|_F=1} \langle \nabla f(U^*{V^*}^\top), U_{S_U}V^\top \rangle \\
&\leq e_{\text{stat}} \cdot \|V\|_2.
\end{aligned}
\end{equation} 
We then have
\begin{equation}
\begin{aligned}
\Big\|[\nabla f(UV^\top)V]_{S_U}\Big\|_F^2 &= \bignorm{[\nabla f(UV^\top)V-\nabla f(U^*{V^*}^\top)V+\nabla f(U^*{V^*}^\top)V]_{S_U}}_F^2 \\
&\leq 2\Big\|[\nabla f(UV^\top)V-\nabla f(U^*{V^*}^\top)V]_{S_U}\Big\|_F^2 + 2\Big\|[\nabla f(U^*{V^*}^\top)V]_{S_U}\Big\|_F^2  \\
&\leq 2\Big\|\sbr{\nabla f(UV^\top)-\nabla f(U^*{V^*}^\top)}_{S_U, S_V}\Big\|_F^2 \cdot \|V\|_2^2 + 2e_{\text{stat}}^2 \cdot \|V\|_2^2 \\
&\leq 2\bigg( \Big\|\sbr{\nabla f(UV^\top)-\nabla f(U^*{V^*}^\top)}_{S_U, S_V}\Big\|_F^2 + e_{\text{stat}}^2 \bigg) \cdot \|Z\|_2^2,
\end{aligned}
\end{equation} 
where the first inequality follows since
$\|A+B\|_F^2 \leq 2\|A\|_F^2 + 2\|B\|_F^2$, and the last inequality
follows since $\max(\|U\|_{2},\|V\|_{2}) \leq \|Z\|_{2}$. Combining the results, we have
\begin{equation}
\label{eq:T_2}
\begin{aligned}
T_2 &= 
\Big\|[\nabla f(UV^\top)V]_{S_U}\Big\|_F^2  +
\Big\|[\nabla f(UV^\top)^\top U]_{S_V}\Big\|_F^2 \\
& \leq 4\cdot
\rbr{
\Big\|\sbr{\nabla f(UV^\top)-\nabla f(U^*{V^*}^\top)}_{S_U, S_V}\Big\|_F^2 + 
e_{\text{stat}}^2
} \cdot \|Z\|_{2}^2.
\end{aligned}
\end{equation} 

For $R_1$, Lemma B.1 of \cite{park2016finding} gives
\begin{multline}
\label{eq:R_1}
R_1 \geq \underbrace{\frac 12 \|\nabla g\|_F^2}_{R_{11}} + \underbrace{\frac 18 \Big[\big\|UU^\top-U^*{U^*}^\top\big\|_F^2 + \big\|VV^\top-V^*{V^*}^\top\big\|_F^2 - 2\big\|UV^\top-U^*{V^*}^\top\big\|_F^2\Big]}_{R_{12}} \\
 - \underbrace{\frac 12 \|\nabla g\|_2 \cdot \|\Delta Z\|_F^2}_{R_{13}}.
\end{multline}
For $R_{12}$, we have that 
\begin{equation}
\begin{aligned}
\label{eq:R_12_plus_T_1a}
R_{12} + T_{1a} & = R_{12} + \frac{1}{8}\frac{L \cdot\mu }{L  + \mu }\cdot 4\Big\|{UV^\top - U^*{V^*}^\top}\Big\|_F^2 \\
&\geq \mu_{\min} \Big[\big\|UU^\top-U^*{U^*}^\top\big\|_F^2 + \big\|VV^\top-V^*{V^*}^\top\big\|_F^2 + 2\big\|UV^\top-U^*{V^*}^\top\big\|_F^2\Big] \\
&= \mu_{\min} \big\|ZZ^\top - Z^*{Z^*}^\top\big\|_F^2 \\
&\geq \frac 45 \mu_{\min} \sigma_r^2(Z^*) \cdot d^2(Z,Z^*) \\
&= \frac 85 \mu_{\min} \sigma_r(\Theta^*) \cdot d^2(Z,Z^*),
\end{aligned}
\end{equation}
where the first inequality follows from the definition of
$\mu_{\min}$, the second inequality follows from Lemma 5.4 of
\cite{Tu2016Low}, and the last equality follows from
$\sigma_r(Z^*) = \sqrt{2\sigma_r(\Theta^*)}$.

For $R_{13}$, recall that $\Delta Z$ satisfies \eqref{eq:bound_I_0}, we have that 
\begin{equation}
\begin{aligned}
\label{eq:R_13}
R_{13} & \leq \frac 12 \|\nabla g\|_2 \cdot \|\Delta Z\|_F \cdot \sqrt{ \frac{8}{5} \mu_{\min}\sigma_r(\Theta^*) }\\
&\leq \frac{2}{5} \mu_{\min}\sigma_r(\Theta^*) \cdot d^2(Z,Z^*) + \frac 14 \|\nabla g\|_F^2.
\end{aligned}
\end{equation}
Combining \eqref{eq:T_1_simplified}, \eqref{eq:R_1}, \eqref{eq:R_12_plus_T_1a}, and \eqref{eq:R_13}, we obtain
\begin{equation}
\begin{aligned}
\label{eq:T_1_plus_R_1}
T_1 + R_1 &\geq \frac{2}{5} \mu_{\min}\sigma_r(\Theta^*) \cdot d^2(Z,Z^*) + \frac 14 \|\nabla g\|_F^2
- \frac{1}{2}\rbr{\frac{L  + \mu }{L \cdot\mu }+\frac{1}{L +\mu }}\cdot e_{\rm stat}^2  \\
&\qquad + \frac{1}{2(L +\mu )}\cdot\Big\|\sbr{\nabla f(UV^\top) - \nabla f(U^*{V^*}^\top)}_{S_U,S_V}\Big\|_F^2 .
\end{aligned}
\end{equation}

For $R_2$, we have 
\begin{equation}
\begin{aligned}
\label{eq:R_2}
R_2 = \|U \nabla g\|_F^2 + \|V \nabla g\|_F^2 \leq 2\|Z\|_2^2 \cdot \|\nabla g\|_F^2.
\end{aligned}
\end{equation}
Combining \eqref{eq:T_2}, \eqref{eq:T_1_plus_R_1}, and \eqref{eq:R_2}, we have
\begin{equation}
\begin{aligned}
\label{eq:d2_intermediate}
d^2(Z, Z^*) - & 2\eta \cdot (T_1+R_1) + 2\eta^2 \cdot (T_2+R_2)  \\
& \leq \rbr{1 - \eta\cdot
\frac{2}{5} \mu_{\min}\sigma_r(\Theta^*)
}\cdot d^2(Z, Z^*) \\
& \qquad +
\eta\rbr{4\eta\cdot\|Z\|_2^2 - \frac{1}{2(L +\mu )}}\cdot\Big\|\sbr{\nabla f(UV^\top) - \nabla f(U^*{V^*}^\top)}_{S_U,S_V}\Big\|_F^2 \\
&\qquad + \eta\bigg(2\eta\cdot\|Z\|_2^2 - \frac 14 \bigg) \|\nabla g\|_F^2 \\
&\qquad + 
\eta
\rbr{
\frac{L  + \mu }{2\mu L }+\frac{1}{2(L +\mu )} +
4\eta\cdot\|Z\|_2^2
}\cdot e_{\rm stat}^2.
\end{aligned}
\end{equation}
Under the choice of the step size,
\begin{equation}
\label{eq:choice_of_step_size}
\eta \leq \frac{1}{8\|Z\|_2^2} \cdot \min\Big\{\frac{1}{2(\mu +L )}, 1\Big\},
\end{equation}
the second term and third term in \eqref{eq:d2_intermediate} are non-positive and we
drop them to  get 
\begin{equation}
\begin{aligned}
\label{eq:d2_intermediate_simplified}
d^2(Z, Z^*) - & 2\eta \cdot (T_1+R_1) + 2\eta^2 \cdot (T_2+R_2)   \\
& \leq \rbr{1 - \eta\cdot
\frac{2}{5} \mu_{\min}\sigma_r(\Theta^*)
}\cdot d^2(Z, Z^*)
 +
\eta\cdot\frac{L  + \mu }{L \cdot\mu }\cdot e_{\rm stat}^2.
\end{aligned}
\end{equation}
Plugging \eqref{eq:d2_intermediate_simplified} into
\eqref{eq:dz_bound} we finish the proof.

\subsection{Proof of Lemma \ref{lemma:step_size_constant}}
\label{A:lemma:step_size_constant}

Comparing \eqref{eq:eta_constant} and \eqref{eq:step_size_condition} we see that we only need to show $\|Z\|_2^2 \leq 2\|Z_0\|_2^2$. 
Let $O \in \Ocal(r)$ be such that
\[
d^2(Z, Z^*) = \|U - U^* O \|_F^2 + \|V - V^* O \|_F^2.
\]
By triangular inequality we have
\begin{equation}
\begin{aligned}
\label{eq:98}
\|Z\|_2 &\leq \|Z^*O\|_2 + \|Z - Z^*O\|_2 \\
&\leq \|Z^*\|_2 + \sqrt{\frac 45 \mu_{\min}\sigma_r(\Theta^*) \cdot \frac{1}{\mu +L }} \\
&\leq \|Z^*\|_2 + \sqrt{\frac 45 \cdot \frac 18 \frac{\mu L }{\mu +L } \cdot \frac 12\sigma_r^2(Z^*) \cdot \frac{1}{\mu +L }} \\
&\leq \|Z^*\|_2 + \sqrt{\frac{1}{80} \sigma_r^2(Z^*) } \\
&\leq \frac 98\|Z^*\|_2,
\end{aligned}
\end{equation}
where the third inequality follows from the definition of $\mu_{\min}$
and $\sigma_r^2(Z^*) = 2\sigma_r(\Theta^*)$, and the fourth inequality
follows from $\frac{ab}{(a+b)^2} \leq \frac 14$. Similarly, we have
\begin{equation}
\begin{aligned}
\label{eq:78}
\|Z_0\|_2 &\geq \|Z^*O\|_2 - \|Z_0 - Z^*O\|_2 \\
&\geq \|Z^*\|_2 - \sqrt{\frac{1}{80} \sigma_r^2(Z^*) } \\
&\geq \frac 78\|Z^*\|_2.
\end{aligned}
\end{equation}
Combining \eqref{eq:98} and \eqref{eq:78} we have 
\begin{equation}
\begin{aligned}
\|Z\|_2 \leq \frac98 \cdot \frac 87 \|Z_0\|_2 \leq \sqrt{2} \|Z_0\|_2,
\end{aligned}
\end{equation}
which completes the proof.

\subsection{Proof of Lemma \ref{lemma:e_stat}}
\label{A:lemma:e_stat}

Let $\Omega(s, m)$ denote a collection of subsets of
$\cbr{1,\ldots,m}$ of size $s$.  Let $S_U \in \Omega(s_1, p)$ and
$S_V \in \Omega(s_2, k)$ be fixed. With some abuse of notation, let
$\Wcal(S_U) = \{U \in \RR^{p \times 2r} \mid \|U_{S_U^c}\| = 0,
\|U_{S_U}\|_2 = 1\}$ and 
$\Wcal(S_V) = \{V \in \RR^{k \times 2r} \mid \|V_{S_V^c}\| = 0,
\|V_{S_V}\|_F = 1\}$.
Let $\Ncal_U(\epsilon)$ and $\Ncal_V(\epsilon)$ be the epsilon net of
$\Wcal_U$ and $\Wcal_V$, respectively.  Using Lemma 10 and Lemma 11 of
\cite{Vu2011Singular}, we know that
$|\Ncal_U(\epsilon)| \leq (3\epsilon^{-1})^{2r\cdot s_1}$,
$|\Ncal_V(\epsilon)| \leq (3\epsilon^{-1})^{2r\cdot s_2}$, and
\begin{equation}
\begin{aligned}
\sup_{\substack{U\in\Wcal(S_U)\\V\in\Wcal(S_V)}} \frac 1n \tr\big(E^\top XUV^\top\big) 
&\leq (1-\epsilon)^{-2} \max_{\substack{U \in \Ncal_U(\epsilon) \\ V \in \Ncal_V(\epsilon)}} \frac 1n \tr\big(E^\top XUV^\top\big).
\end{aligned}
\end{equation}
For fixed $U$ and $V$, the random variable
$\tr\big(E^\top XUV^\top\big)$ is a sub-Gaussian
with variance proxy $\sigma^2\|  X_{S_U}U_{S_U}V_{S_V}^\top \|_F^2$.
This variance proxy can be bounded as %\mcomment{Make sure the notation here is consistent with the notation introduced in RE condition}
\[
\sigma^2\|  X_{S_U}U_{S_U}V_{S_V}^\top \|_F^2 \leq 
\sigma^2\cdot \max_{S_U \in \Omega(s_1, p)}\|(X^\top X)_{S_US_U}\|_2
= n \sigma^2 \bar\kappa(s_1).
\] 
%Let $C_X = \frac 1n \max_{S_U \in \Omega(s_1, p)}\|(X^\top X)_{S_US_U}\|_2$. 
Using a tail bound for sub-Gaussian random variables, 
we get
\[
\frac 1n \tr\big(E^\top XU_{S_U}V_{S_V}^\top\big)  \leq
2\sigma \sqrt{ \frac{\bar\kappa(s_1)\log\frac1\delta}{n} }
\]
with probability at least $1-\delta$.
To obtain an upper bound on $e_{\text{stat}}$, we will apply the union bound
$\Omega(s_1, p)$, $\Omega(s_2, k)$, $\Ncal_U(\epsilon)$ and $\Ncal_V(\epsilon)$.
We set $\epsilon=\frac12$ and obtain
\[
e_{\text{stat}}
\leq
8\sigma \sqrt{\frac{\bar\kappa(s_1)}{n} 
\Big(s_1\log p + s_2\log k + 2r (s_1+s_2)\log6 + \log \frac1\delta }\Big)
\]
with probability at least $1-\delta$. Taking
$\delta = (p \vee k)^{-1}$ completes the proof.

%%% Local Variables:
%%% TeX-master: "paper.tex"
%%% End:

\section{Conclusion}
\label{sec:conclusion}

We proposed a new GDT algorithm to efficiently solve for optimization
problem with simultaneous low rank and row and/or column sparsity
structure on the coefficient matrix.  We show the linear convergence of 
GDT algorithm up to statistical error.  As an application, for
multi-task learning problem we show that the statistical error is near
optimal compared to the minimax rate.  Experiments on multi-task
learning demonstrate competitive performance and much faster running
speed compared to existing methods.  For future extensions, it would
be of interest to extend GDT algorithm to non-linear models.  Another
potential direction would be to adaptively select the sparsity level
$s_1$ and $s_2$ in hard thresholding step.

\section*{Acknowledgments}

This work was completed in part with resources provided by the
University of Chicago Research Computing Center. We thank Zhaoran Wang and
Zhuoran Yang for many useful discussions and for suggesting
an application to multi-task reinforcement learning.

\bibliographystyle{plain}
\bibliography{paper} 

\end{document}